    \pgfplotsset{compat = newest}
\begin{document}
\title{Term Rewriting Based on Set Automaton Matching}
%
%
\author{Mark Bouwman\and Rick Erkens}

%
\authorrunning{M. Bouwman, R. Erkens}
\authorrunning{}
%
\institute{}
\institute{Eindhoven University of Technology
\\ The Netherlands
\\ \email{\{m.s.bouwman, r.j.a.erkens\}@tue.nl}
}

\maketitle              
\begin{abstract}
In this article we investigate how a subterm pattern matching algorithm can be exploited to implement efficient term
rewriting procedures.
From the left-hand sides of the rewrite system we construct a set automaton,
which can be used to find all redexes in a term efficiently.
We formally describe a procedure that,
given a rewrite strategy, interleaves pattern matching steps and rewriting steps and thus
smoothly integrates redex discovery and subterm replacement.
We then present an efficient implementation that instantiates this procedure with outermost rewriting,
and present the results of some experiments.
Our implementation shows to be competitive with comparable tools.
\keywords{Term rewriting  \and Term pattern matching \and Rewrite engine.}
\end{abstract}
\section{Introduction}
The motivation for this research is to develop a new rewriter
for fast state space generation and model checking in mCRL2 \cite{toolsetpaper}.
In this formalism, every state is associated with a vector of normalized data expressions.
Over the years more complex term rewrite systems have been generated from
industrial domain-specific languages.
Typically these contain many complex rules,
which makes an optimized rewrite engine desirable.
In our setting, rules may have arbitrary overlap between left-hand sides,
as opposed to functional programs and orthogonal rewrite systems.
Moreover the left-hand sides can be non-linear (i.e. variables can occur multiple times),
and the rewrite strategy is not fixed.

To allow one to deal with large rewrite systems,
implementations of term rewriting use efficient pattern matching algorithms to find redexes.
They exploit the overlap between the left-hand sides of the rules,
and mostly use efficient \emph{root pattern matching algorithms},
i.e.~techniques that preprocess the left-hand sides into a data structure that can be used to
quickly look for matches at a specific position in the term.
Root pattern matching can be applied to every position in a term to yield every redex,
but this solution is not efficient in general.
Function symbols that are inspected in one call to the matching algorithm,
will be inspected again when the same matching algorithm is called at a later point from a different root.
This means that symbols will be visited many times,
resulting in duplicate work.
A theoretically better solution is to use an algorithm that scans a term
for every pattern at every position simultaneously.
Such \emph{subterm pattern matching algorithms} exist,
but they are generally not used in practical implementations of term rewriting.

We set out to investigate whether an efficient rewrite engine can be made
by using a subterm pattern matching algorithm.
We are not aware of any tools or methods that explicitly and formally intertwine these two,
even though the idea of using subterm pattern matching for rewriting
dates back to the 1980s \cite{hoffmann:interpreter,hoffmann:matching}.
The contribution of this paper is a formal description of a rewrite procedure that
explicitly interleaves rewrite steps with
subterm pattern matching steps,
based on the set automaton pattern matching technique described in \cite{setautomaton}.
We prove that our procedure correctly implements term rewriting.
Lastly we describe a prototype implementation,
and compare its performance to other rewrite engines by
using the benchmark set created by Garavel et al. in the context of the recurring Rewrite Engine
Competition \cite{garavel:rec}.
Even though it is a prototype,
its performance is competitive with the fastest algebraic rewriters.

The set automaton matching algorithm generalizes the Aho-Corasick string matching algorithm \cite{aho:stringmatching} to
sets of term patterns.
The main advantage of set automata is that they inspect every symbol of a term only once
by traversing the term from top to bottom.
Even though our rewrite procedure is based on this algorithm,
it does not call the pattern matching function as a complex subroutine.
It rather interleaves automaton-guided pattern matching steps (single function symbol observations) and rewrite steps,
and saves the matching information in a data structure called a \emph{configuration tree}.
Based on the available matching information,
a strategy decides to continue searching for redexes (and where)
or to rewrite using a previously found redex (and which one).
This method allows us to preserve a part of the matching work when we apply a rewrite step.

\subsection{Related work}
The idea to use subterm matching for term rewriting and equational reasoning dates back to Hoffmann and O'Donnell
\cite{hoffmann:interpreter,hoffmann:matching}.
In \cite{dewit:reductiemachine} some algorithms are described based on their ideas.
Huet and L\'evy use matching dags for orthogonal rewrite systems in \cite{huet:computationsII}.
Fokkink et al. transform left-linear rewrite systems into so-called minimal rewrite systems with only shallow patterns and
right-hand sides \cite{fokkink:arm}.
The transformation on the right-hand sides of the rules
is based on one of Hoffmann and O'Donnell's algorithms \cite{hoffmann:matching}.

Current state-of-the-art rewrite engines use efficient root pattern matching techniques.
Maude, the fastest rewrite engine among the algebraic tools of the latest edition of the Rewrite Engine Competition \cite{garavel:rec},
deploys automaton-based solutions to several kinds of pattern matching problems \cite{maude}.
In particular it uses discrimination nets \cite{christian:discrimination} for matching linear patterns.
Dedukti uses decision trees \cite{hondet:dedukti} to deal with non-linearity and higher-order
pattern matching.
The compiling rewriter of mCRL2 uses match trees \cite{weerdenburg:matchtrees} that compute subtitutions
on the fly as well.
An overview of root pattern matching techniques is given in \cite{har:termindexing}.

In functional languages like Haskell \cite{haskell} and Clean \cite{clean}
algebraic data types are specified with constructor function symbols
that cannot be the head symbol of any rewrite rules.
In combination with other language choices, such as priorities between patterns,
this ensures that pattern matching is easier than in general term rewriting.
Since our target language is algebraic,
we make no distinction between constructors and function-defining symbols,
and impose no restrictions like orthogonality.

For term rewriting, a deterministic top-down solution to subterm matching is desirable,
so we opted to use set automata for our procedures.
The other deterministic top-down algorithms in \cite{cleophas:related,hoffmann:matching}
are reductions to string matching:
they collect stringpath matches during the matching operation which must be merged while rewriting.
Set automata do not suffer this administrative overhead and output the term matches themselves while matching.
The drawback of set automata however, is that they require a look-ahead of at most the largest depth of any left-hand side.
This look-ahead is necessary since top-down deterministic tree automata
are not powerful enough to perform any pattern matching \cite{tata}.
In some cases this look-ahead can be mitigated,
but a general solution to use set automata for linear time matching is an open problem.

\subsection{Structure of the paper}
In Section~\ref{sec:preliminaries} we present some preliminary definitions.
In Section~\ref{sec:setautomaton} we recall the set automaton matching technique from \cite{setautomaton}.
A set automaton can be executed to find all redexes in a term,
by storing intermediate configurations in a \emph{set}.
For the rewriting procedures that we introduce,
it is important to save the matching information in a more structured way than a set.
To this end, we introduce \emph{configuration trees},
where the configurations that would previously be stored in a set,
appear at the leaves of the tree.

To further aid the reader in understanding set automata,
we recall in detail how a set automaton is constructed from a set of patterns
in Section~\ref{sec:construction}.
This section is largely copied from \cite{setautomaton}.

In Section~\ref{sec:pruningrewriter} we formally describe a rewrite procedure that takes an arbitrary strategy,
and searches for redexes while maintaining a configuration tree.
When a rewrite step is performed,
a part of the information in the configuration tree is saved.
We show that the rewrite procedure correctly implements term rewriting, regardless of the strategy.

Since set automata can only deal with linear patterns,
we define a rewrite procedure that supports non-linear rewrite systems in Section~\ref{sec:nonlinear}.
A slight adaptation to the set automaton construction that requires non-linear patterns to be yielded in a separate set,
suffices to support these rewrite systems,
albeit in a (arguably necessarily) non-optimal way.

To bridge the gap between formal presentation and practice
we discuss an efficient implementation of our rewrite procedure in Section~\ref{sec:implementation}.
Specifically it uses an outmost rewrite strategy and uses a stack to store the configuration tree.
We also present some benchmarks and extensively compare the results to those of some other rewrite engines.

Lastly in Section~\ref{sec:conclusion}
we present some concluding remarks and some ideas for
further improvements of the technique that we intend to explore in the future.

\section{Preliminaries}\label{sec:preliminaries}
We recall some preliminaries, found in e.g. \cite{termrewriting}.
A \emph{ranked alphabet} is a finite set of function symbols $\bF$
paired with an arity mapping $\#:\bF\to\bN$.
The set of \emph{terms over $\bF$ with variables in $\bV$} is denoted by $\bT(\bF,\bV)$.
The set of variables of $t$ is given by $\vars(t)$.
A ground term is a term without variables.

A \emph{position} is a list of positive integers.
We use $\bP$ to denote the set of all positions
and we use $\epsilon$ to denote the empty list,
also referred to as the \emph{root position}.
The concatenation of two positions $p$ and $q$ is denoted by $p.q$.
The root position acts as a unit element with respect to concatenation.
Given a set $X$ we often denote a pair $(x,p)$ in $X\times\bP$ by $x@p$.
The \emph{term domain function} $\cD:\bT(\bF,\bV)\to\cP(\bP)$ maps a term to the set of positions
on which it has function symbols or variables,
for example $\cD(f(g(a),b))=\{\epsilon,1,1.1,2\}$.
Similarly the \emph{edge function} $\cE:\bT(\bF,\bV)\to\cP(\bP)$ maps a term $t$ to the subset of $\cD(t)$
where $t$ has variables.
Define $\cD_{\setminus\cE}(t)=\cD(t)\setminus\cE(t)$.
Given a term $t$ and a position $p\in\cD(t)$,
the subterm of $t$ at position $p$ is denoted by $t|_p$.
Replacing the subterm of $t$ at position $p$ by $u$ is denoted by $t[u]_p$.
A \emph{pattern} is a term, but not a variable.
Said pattern is \emph{linear} if no variable occurs more than once in $\ell$.
Denote the \emph{head symbol} of a pattern $\ell$ by $\hd(\ell)$.

A \emph{substitution} is a mapping $\sigma:\bV\to\bT(\bF,\bV)$;
the term obtained by replacing all variables $x\in\vars(t)$ by $\sigma(x)$ is denoted by $t^\sigma$.
Term $t$ \emph{matches} pattern $\ell$ iff $t=\ell^\sigma$ for some $\sigma$.
A \emph{rewrite rule} is a pattern $\ell$ paired with a term $r$, denoted $\ell\to r$,
such that all variables of $r$ occur in $\ell$ as well.
A \emph{term rewrite system (TRS)} is a finite, non-empty set of rewrite rules.
A \emph{redex of $t$} is a rule $\ell\to r$ and a position $p$ such that
$t|_p$ matches $\ell$.
Such a redex is often denoted by $(\ell\to r)@p$.

Let $t$ be a term with redex $(\ell\to r)@p$.
Then $t$ rewrites to $t'$,
denoted by $t\step{(\ell\to r)@p} t'$,
iff there is a term $u$ with $p\in\cD(u)$ such that
$t=u[\ell^\sigma]_p$ and $t'=u[r^\sigma]_p$.
We write $t\rr t'$ if $t$ rewrites to $t'$ by some rule on some position.
The transitive reflexive closure of $\rr$ is denoted by $\rr^*$.
A term $t$ is in \emph{normal form} iff $t$ has no redexes.
Given a rule $\ell\to r$ and a term $t$ such that $\ell$ matches $t$ at position $p$,
we denote by $t[(\ell\to r)@p]$ the unique term $t'$ such that $t\step{(\ell\to r)@p}t'$.

\section{Set automata}\label{sec:setautomaton}
Since set automata form the foundation for the rewrite procedures,
we recall them in detail from \cite{setautomaton}.
First we recall their formal definition.
Then we give two methods to execute them on terms.
In the next section we recall the set automaton construction algorithm.

Set automata generalize Aho and Corasick's string matching automata \cite{aho:stringmatching} to terms.
They prescribe how to search for patterns in a term in a top-down fashion and allow us to match all patterns
in every subterm simultaneously.
To find all matches, we have to inspect every symbol of a term only once,
modulo some look-ahead.

Let $\cR$ be a TRS with $\cL=\{\ell\mid\ell\to r\in\cR\}$ being its set of left-hand sides.
A set automaton for $\cL$ over the alphabet $\bF$
is a tuple $(S,s_0,L,\delta,\out)$ where
\begin{itemize}
\item $S$ is a finite set of \emph{states};
\item $s_0\in S$ is the \emph{initial state};
\item $L:S\to\bP$ is a \emph{state labelling function};
\item $\delta:S\times\bF\to\cP(S\times\bP)$ is a \emph{transition function};
\item $\out:S\times\bF\to\cP(\cL\times\bP)$ is an \emph{output function}.
\end{itemize}

\begin{figure}[h!]
\centering
\includegraphics[scale=0.8]{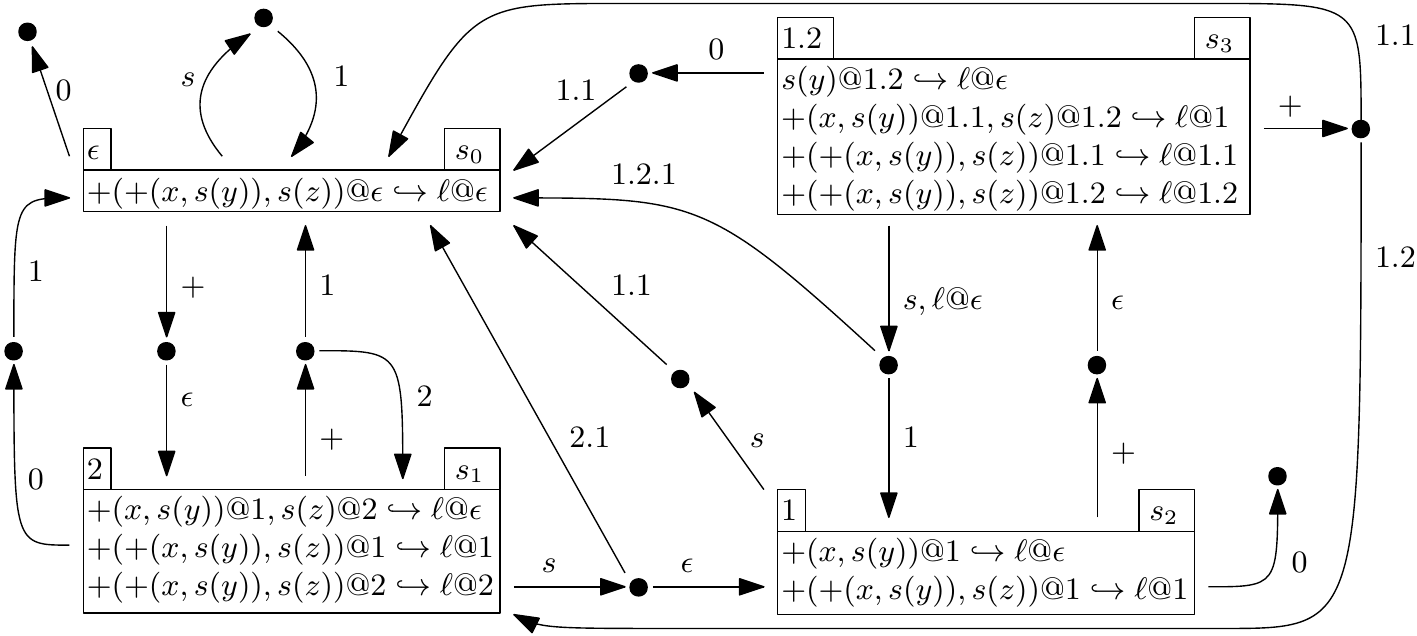}
\caption{A set automaton for $\ell=\add(\add(x,\s(y)),\s(z))$.}
\label{fig:smallboy}
\end{figure}

Consider the pattern $\ell=\add(\add(x,\s(y)),\s(z))$.
In Fig.~\ref{fig:smallboy} the set automaton for the singleton pattern set $\{\ell\}$
is graphically displayed.
The lines of the form $x \hookrightarrow y$ in the boxes are important in the next section where we discuss the
construction, but for now they can be ignored.
The states are named $s_0,\dots,s_3$, displayed on the top right corner.
The state labels are given on the top left corner; they provide information on what position to inspect next.
The drawn hypertransitions are a combination of the transition function and the output function,
e.g. the $s$-labelled transition from state $s_3$ is captured by
$\delta(s_3, s) = \{(s_0, 1.2.1), (s_2, 1)\}$ and $\out(s_3, s) = (\ell, \epsilon)$.

Note that the transition and output functions are total: for every state and symbol combination there is a transition.
Though for some transitions there may not be target states: e.g. $\delta(s_0, 0) = \emptyset$.
Intuitively, matching goes as follows. We start in the initial state and with the root position:
the \emph{configuration} $(s_0, \epsilon)$.
The state label instructs us what position to inspect relative to the position in the configuration.
With the inspected symbol we select the corresponding transition.
If the transition has an output then we found a match.
We follow the transition function, yielding new states to visit (possibly multiple, possibly none).
If the transition specifies a non-root position then the position in the configuration is deepened.
For example, suppose we want to match on the term $s(0)$.
Then we start with the initial configuration $(s_0, \epsilon)$.
We inspect  $\hd(s(0)|_{\epsilon.\epsilon}) = s$ and take the selfloop transition,
yielding the configuration $(s_0, 1)$.
We now inspect $\hd(s(0)|_{1.\epsilon}) = 0$, and take the corresponding transition.
This does not lead to new configurations, hence matching is finished.

Let us formalize this matching procedure. To find redexes in a ground term $t$, we traverse it
by using the set automaton.
To this end,
a set of configurations\footnote{Hence the name \emph{set} automaton}
- pairs $(s,p)$ consisting of a state $s$ in the set automaton
and a position $p$ in the term - needs to be maintained.
From a configuration $(s,p)$ we observe the function symbol of term $t$ on position $p.L(s)$: $\hd(t|_{p.L(s)})$.
Then we compute the successor configurations and the output as follows.
\begin{itemize}
\item
For all successors $(s',q)\in\delta(s,f)$ obtained from configuration $(s,p)$
we add a new configuration $(s',p.q)$ to the configuration set.
\item
When $(\ell,q)\in\out(s,f)$ is given as output from configuration $(s,p)$
then there is a match for $\ell$ at position $p.q$.
In all of our examples the position $q$ is just $\epsilon$,
but when a pattern is a proper subpattern of another,
then $q$ may be deeper.
\end{itemize}
The set of all redexes of a ground term $t$ is retrieved by $\eval(s_0,\epsilon,t)$ where
\[
\eval(s,p,t)=\{(\ell,p.q)\mid (\ell,q)\in\out(s,f)\}\cup\bigcup_{(s',p')\in\delta(s,f)} \eval(s',p.p',t|_{p'})
\]
where $f=\hd(t|_{L(s)})$.

\begin{example}\label{ex:matching}
For a more elaborate example let us consider the ground term $t = +(+(+(0,s(0)),s(0)),s(0))$. The table below shows how the set automaton is used to find all matches in $t$ (whilst inspecting every symbol once).
\vspace*{0.5cm}

\centering
\begin{tabular}{l|l|p{2cm}|p{2cm}|l}
Configuration set & Configuration & Observed \newline position & Observed \newline symbol & Output \\\hline
$(s_0,\epsilon)$ & $(s_0,\epsilon)$ & $\epsilon$ & $+$ & \\
$(s_1,\epsilon)$& $(s_1,\epsilon)$ & $2$ & $s$ & \\
$(s_2,\epsilon)$, $(s_0,2.1)$& $(s_2,\epsilon)$ & $1$ & $+$ & \\
$(s_0,2.1)$, $(s_3, \epsilon)$  & $(s_0, 2.1)$ & $2.1$ & $0$ & \\
$(s_3, \epsilon)$ & $(s_3, \epsilon)$ & $1.2$ & $s$ & $\ell@\epsilon$\\
$(s_0,1.2.1)$, $(s_2,1)$ & $(s_0,1.2.1)$ & $1.2.1$ & $0$ & \\
$(s_2,1)$ & $(s_2,1)$ & $1.1$ & $+$ &\\
$(s_3,1)$ & $(s_3,1)$ & $1.1.2$ & $s$ & $\ell@1$\\
$(s_0,1.1.2.1)$, $(s_2,1.1)$ & $(s_0,1.1.2.1)$ & $1.1.2.1$ & $0$ & \\
$(s_2,1.1)$ & $(s_2,1.1)$ & $1.1.1$ & $0$ & \\
\end{tabular}
\end{example}

\subsection{Configuration trees}
Instead of storing configurations in a set we can also represent the discovered configurations
in a tree structure with $(s_0, \epsilon)$ as root. 
Originally this structure was used in the correctness proof of set automata,
but it has appeared to be very useful for rewriting,
as we can now also reason about partially completed configuration trees.
An unexplored configuration appears as a $\bud$-labelled pair $\bud(s,p)$,
a childless vertex in the tree which we refer to as a \emph{bud}.
Upon observing a function symbol in the subject term,
this bud is `grown' into an $\node$-labelled triple $\node(s,p,cts)$ called a \emph{node}.
The component $cts$ of a node is a possibly empty set of successor trees;
when the node is created by growing a bud,
it consists of buds,
one for every successor state given by $\delta(s,f)$.
If there are no successor states given by $\delta(s,f)$ then $cts$ is empty.
Hence the trees $\bud(s,p)$ and $\node(s,p,\emptyset)$ differ in the sense that
the bud indicates an unexplored configuration and the node indicates an explored configuration
with no successors.
Formally the set of \emph{configuration trees} $\cC\cT$ is formed by the grammar
$\cC\cT ::= \bud(s,p) \mid \node(s,p,cts)$ where $cts$ is a finite, possibly empty set of configuration trees.
Define the set of $\bud$/$\node$-labelled configurations of $ct$ by $\buds(ct)$ and $\nodes(ct)$ respectively.
The set of all configurations is given by $\confs(ct)$.
%
The \emph{completed configuration tree} for term $t$ is defined by $\completed(s_0,\epsilon,t)$ where
\[
\completed(s,p,t)= \node(s,p,\{\completed(s',p.p',t)\mid (s',p')\in\delta(s,\hd(t|_{p.L(s)}))\})
\,.
\]

\begin{example}
The configuration tree in Fig.~\ref{fig:configtree} is associated to line 5 of the table in Example \ref{ex:matching}.
Note that the configuration tree does not have the same shape as $t$.
\end{example}
\begin{figure}[h!]
\centering
\includegraphics[scale=0.8]{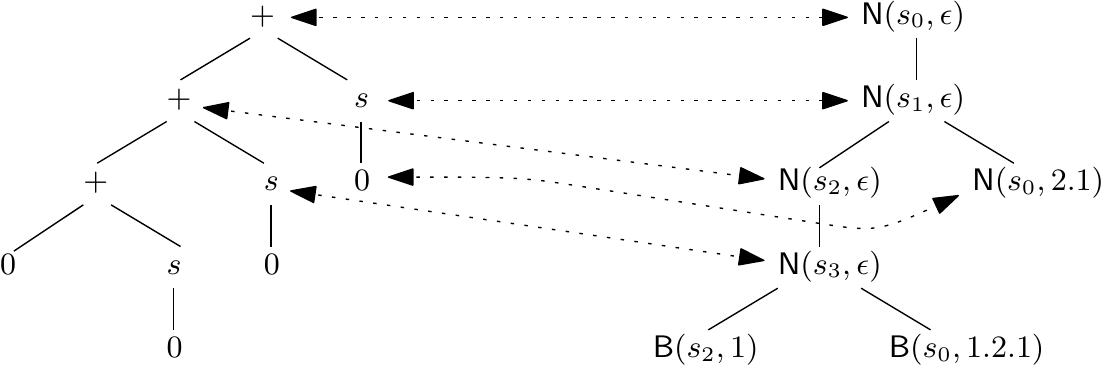}
\caption{The term $t = +(+(+(0,s(0)),s(0)),s(0))$ and a partial configuration tree.}
\label{fig:configtree}
\end{figure}
Define the abbreviation $\completed(t)=\completed(s_0,\epsilon,t)$.
A completed configuration tree is a one-to-one correspondence between
configurations and the function symbols of a term,
assigned to it by a set automaton.
\begin{theorem}\label{thm:completedCT}
Let $M=(S,s_0,L,\delta,\out)$ be a set automaton generated from the TRS $\cR$.
Then $\node(s,p,cts)\mapsto p.L(s)$ is a bijection from
the subtrees of $\completed(t)$ to $\cD(t)$.
\end{theorem}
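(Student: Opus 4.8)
The plan is to prove that the map $\Phi \colon \node(s,p,cts) \mapsto p.L(s)$ is injective and surjective onto $\cD(t)$ by a single well-founded induction over the subtrees of $\completed(t)$, invoking the structural invariants that the construction of Section~\ref{sec:construction} imposes on $L$ and $\delta$. This appeal is unavoidable: since $L$ and $\delta$ are arbitrary in the bare definition of a set automaton, a degenerate automaton could fail to inspect every position, so the theorem genuinely depends on how the automaton is built. I would first record that the recursion defining $\completed$ terminates, so that ``subtree'' is meaningful and finite; this follows once one knows that each transition makes progress, i.e.\ the child configurations inspect positions that are fresh with respect to the parent, keeping all inspected positions inside the finite set $\cD(t)$.

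To run the induction I would strengthen the statement to arbitrary configurations appearing in the tree. For a configuration $(s,p)$ let $\Pi(s,p) \subseteq \cD(t)$ denote the set of positions that the construction designates for the subtree rooted at $(s,p)$, and prove that $\Phi$ restricts to a bijection from the subtrees of $\completed(s,p,t)$ onto $\Pi(s,p)$. Writing $f = \hd(t|_{p.L(s)})$, the inductive step rests on two invariants of the construction. First, a \emph{freshness} invariant: $p.L(s)$ together with the positions handled by the children $(s',p.p')$ for $(s',p') \in \delta(s,f)$ are pairwise distinct; this gives injectivity, since it forbids two nodes from inspecting the same position. Second, a \emph{coverage} invariant, expressed as the disjoint-union identity $\Pi(s,p) = \{p.L(s)\} \uplus \biguplus_{(s',p') \in \delta(s,f)} \Pi(s',p.p')$; applying the induction hypothesis to each child then yields surjectivity onto $\Pi(s,p)$. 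The base case is the root configuration $(s_0,\epsilon)$, where the construction sets $L(s_0)=\epsilon$ so that $\Phi$ hits $\epsilon$, and $\Pi(s_0,\epsilon) = \cD(t)$; composing the strengthened claim with this base case gives the theorem.

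The main obstacle is the coverage invariant, i.e.\ surjectivity: one must show the automaton inspects \emph{every} position of $\cD(t)$ and never terminates a branch prematurely. Injectivity is comparatively local, following from the facts that a state's label points to an as-yet-uninspected position and that distinct transitions deepen the base position into disjoint parts of the term. Coverage is global and is precisely the correctness property that the completed configuration tree was introduced to certify, so I would isolate it as a lemma about the construction: that the labels and transition targets assigned to each state realise the partition above. Once that lemma is in place, the induction sketched here closes the proof with only routine bookkeeping on positions.
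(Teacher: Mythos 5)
You should first note a point of comparison: the paper itself states Theorem~\ref{thm:completedCT} without proof, importing it from \cite{setautomaton}, so your attempt has to be judged against what a complete proof of the claim requires. Your skeleton --- one well-founded induction, injectivity from disjointness of siblings, surjectivity from a disjoint-union coverage identity, with the observation that everything must come from the construction rather than the bare automaton signature --- has the right shape. But as written the argument is circular exactly at the point you call the ``main obstacle''. You never define $\Pi(s,p)$ except as ``the set of positions that the construction designates for the subtree rooted at $(s,p)$''; with that reading, the coverage identity is true by definition and the base case $\Pi(s_0,\epsilon)=\cD(t)$ \emph{is} the theorem, still unproven. What the proof actually needs is an automaton-intrinsic definition, namely the cone over the match-obligation positions of the state: $\Pi(s,p)=\{p.q\in\cD(t)\mid \exists\, mo\hookrightarrow ma\in s,\ \exists\, q'\in\pos(mo):q'\leq q\}$. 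For $s_0$ the obligation positions are $\{\epsilon\}$, which gives the base case for free; the work then shifts to showing that $\deriv$, the dependency partition, and $\lift$ realise the decomposition $\Pi(s,p)=\{p.L(s)\}\uplus\biguplus_{(s',p')\in\delta(s,f)}\Pi(s',p.p')$.

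That decomposition in turn rests on state invariants (proved by induction on reachability) which your proposal neither states nor hints at, and without which the steps fail. First, in every reachable state the obligation positions occurring across \emph{different} goals are pairwise equal or incomparable. This is essential even for injectivity, which you describe as ``comparatively local'': two dependency classes have disjoint \emph{sets} of obligation positions, but disjoint sets can still have nested cones (one class owning $1$, another owning $1.1$), in which case sibling subtrees would inspect the same position. Second, every obligation position $q$ of a reachable state is accompanied by the fresh goals $\ell@q\hookrightarrow\ell@q$ for all $\ell\in\cL$; this is what keeps a pending position alive, because $\reduce$ can delete a whole goal on a head-symbol mismatch, and without the surviving fresh goals at $q$ that position would silently drop out of every successor state, breaking coverage. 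Third, the fresh goals at \emph{every} child position $L(s).i$, $1\leq i\leq\#f$, are what cover the children lying below variable positions of the patterns, which the reduction of existing obligations alone never generates. Finally, your plan to ``first record that the recursion terminates'' because transitions make progress does not go through on its own: transitions with $\gcp(K)=\epsilon$ do not deepen the position (e.g.\ $(s_2,\epsilon)\in\delta(s_1,\s)$ in Fig.~\ref{fig:smallboy}), so termination is not a prior, separate fact; it must be obtained inside the same induction, using $|\Pi(s,p)|$ as the decreasing measure --- which again presupposes the concrete definition of $\Pi$ and the invariants above.
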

This correspondence suffices to find every redex in a term as follows.
In a configuration $(s,p)$ the symbol $f=\hd(t|_{p.L(s)})$ yields some matches
as prescribed by $\out(s,f)$.
Define the discovered matches of a configuration $(s,p)$ and $t$ by
\[
\matches(s,p,t) = \{(\ell@p.q) \mid (\ell,q)\in\out(s,\hd(t|_{p.L(s)}))\}
\,.
\]
The following theorem states the correctness claim for pattern matching.
\begin{theorem}\label{thm:correctnessofmatching}
For every ground term $t$, the set of all redexes of $t$ is given by
$\bigcup\{\matches(s,p,t) \mid (s,p)\in\nodes(\completed(t))\}$.
\end{theorem}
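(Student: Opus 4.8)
The plan is to reduce the theorem to the correctness of the $\eval$ function, which was already asserted to return exactly the set of redexes of $t$ when called as $\eval(s_0,\epsilon,t)$. Concretely, I would prove the set equality
\[
\bigcup\{\matches(s,p,t) \mid (s,p)\in\nodes(\completed(t))\} = \eval(s_0,\epsilon,t)
\]
and then invoke the correctness of $\eval$ to conclude. This isolates the genuinely new content — that reorganizing the discovered configurations into a tree neither loses nor duplicates any output — from the previously established matching correctness of set automata.

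To establish the equality I would generalize it to an arbitrary state $s$ and position $p$ and prove, by structural induction on $\completed(s,p,t)$, that
\[
\eval(s,p,t|_p) = \bigcup\{\matches(s',p',t) \mid (s',p')\in\nodes(\completed(s,p,t))\}.
\]
The first observation is the invariant that the third argument of $\eval$ is always the slice of $t$ at its second argument: the top-level call is $\eval(s_0,\epsilon,t|_\epsilon)$, and each recursive call $\eval(s',p.p',(t|_p)|_{p'})$ keeps $t|_{p.p'}$ in the third slot. The second observation aligns the direct output of $\eval$ with $\matches$: since $(t|_p)|_{L(s)}=t|_{p.L(s)}$, the set $\{(\ell,p.q)\mid(\ell,q)\in\out(s,\hd((t|_p)|_{L(s)}))\}$ occurring in $\eval(s,p,t|_p)$ is exactly $\matches(s,p,t)$. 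With these two facts, the recursive clause of $\eval$ and the recursive clause of $\completed$ range over the same successor set $\delta(s,\hd(t|_{p.L(s)}))$, so applying the induction hypothesis to each child $\completed(s',p.p',t)$ closes the argument: the root contributes $\matches(s,p,t)$ and the children contribute the nodes of the subtrees.

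The main obstacle is bookkeeping rather than conceptual. I must carefully reconcile the two position conventions — $\eval$ descends into subterms $t|_{p'}$ while accumulating absolute positions in its second argument, whereas $\completed$ retains the full term $t$ and uses absolute positions throughout — which is precisely what the slicing invariant above resolves. Termination of the induction, hence that $\nodes(\completed(t))$ is finite and the recursion is well-founded, is guaranteed by Theorem~\ref{thm:completedCT}, which exhibits a bijection between the subtrees of $\completed(t)$ and the finite position set $\cD(t)$; this also confirms that $\completed(t)$ contains no buds, so $\nodes(\completed(t))$ coincides with $\confs(\completed(t))$, the full set of visited configurations. Once the displayed equality is in hand, specializing it to $s=s_0$, $p=\epsilon$ and citing the correctness of $\eval(s_0,\epsilon,t)$ yields the theorem.
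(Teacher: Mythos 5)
Your proof is correct, but note that the paper contains no proof of this theorem to compare against: Section~\ref{sec:setautomaton} recalls it from \cite{setautomaton}, and the only in-paper justification is the remark that the bijection of Theorem~\ref{thm:completedCT} plus the output function ``suffices to find every redex.'' Your reduction to the asserted correctness of $\eval(s_0,\epsilon,t)$ is the natural way to make that remark precise, and your two key observations are exactly the ones needed: the slicing invariant $(t|_p)|_{L(s)}=t|_{p.L(s)}$ reconciles the relative-subterm convention of $\eval$ with the absolute-position convention of $\completed$ and $\matches$, and with it both recursions branch over the same successor set $\delta(s,\hd(t|_{p.L(s)}))$, so structural induction on $\completed(s,p,t)$ gives your generalized equality, which at $(s_0,\epsilon)$ yields the theorem. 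Two small remarks. First, invoking Theorem~\ref{thm:completedCT} for the absence of buds is unnecessary: $\completed$ only ever produces $\node$ constructors, so every subtree of $\completed(t)$ is a node by definition. Second, well-foundedness deserves the care you give it: the recursion of $\eval$ and $\completed$ is \emph{not} structurally decreasing in the term, since a transition $(s',\epsilon)\in\delta(s,f)$ leaves the subterm unchanged, so finiteness of $\completed(t)$ is genuinely nontrivial; appealing to Theorem~\ref{thm:completedCT} is acceptable within this paper, because that theorem's statement presupposes $\completed(t)$ is a well-defined finite tree, and once that is granted, structural induction on the tree is legitimate. The one caveat is external: both your argument and the paper's ultimately rest on results imported from \cite{setautomaton} (the correctness of $\eval$, respectively Theorem~\ref{thm:completedCT}), so neither is self-contained, and yours is not more so than the paper's.
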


\section{Construction of a set automaton}\label{sec:construction}
We recall from \cite{setautomaton} how a set automaton is constructed
from a pattern set.
The examples and the explanation overlap largely with the original work.
To understand the rewrite procedures it is not necessary to understand the construction.

We create the states and the transition function on the fly.
To this end we first give a state encoding that represents its information in the pattern matching process.
The informal explanation follows the formal one.

The \emph{subpatterns of a pattern $\ell$} are given by
$\sub(\ell)=\{\ell|_p\mid p\in\cD_{\setminus\cE}(\ell)\}$.
This function is extended to sets of patterns $\cL$ by $\sub(\cL)=\bigcup_{\ell\in\cL}\sub(\ell)$.
The sets of \emph{match obligations} $\MO$ and \emph{match announcements} $\MA$ are defined by
\[
\MO=\cP(\sub(\cL)\times\bP)\setminus\{\emptyset\}
\qquad
\MA=\cL\times\bP
\,.
\]
A \emph{match goal} is a match obligation paired with a match announcement,
i.e.\ a pair in $\MO\times\MA$.
To limit the amount of parentheses, we denote a match goal
$(\{(\ell_1,p_1),\dots,(\ell_n,p_n)\},(\ell@p))$ by $\ell_1@p_1,\dots,\ell_n@p_n\hookrightarrow\ell@p$.
Such a match goal should be read as: ``in order to announce a match for pattern $\ell$ at position $p$,
it remains to observe the (sub)pattern $\ell_i$ on position $p_i$, for all $1\leq i\leq n$''.
We identify states with non-empty sets of match goals, i.e.\ $S=\cP(\MO\times\MA)\setminus\{\emptyset\}$.
This encoding formally captures the idea that a state carries every pattern that still needs to be seen
whenever we arrive in this state in the matching process.

Recall the set automaton from Fig.~\ref{fig:smallboy}.
The initial state $s_0$ has one match goal,
namely $\add(\add(x,\s(y)),\s(z))@\epsilon \hookrightarrow \add(\add(x,\s(y)),\s(z))@\epsilon$.
The match announcement of this goal is abbreviated to $\ell@\epsilon$ in the figure.
Intuitively, whenever we are matching from $s_0$,
we still need to observe every pattern from the root.

State $s_1$ can be reached by following the $\add$-transition from the initial state,
and it has three match goals.
The first goal $\add(x,\s(y))@1,\s(z)@2\hookrightarrow\ell@\epsilon$ is obtained by reducing the match obligation
of $s_0$ to $\add(x,\s(y))@1,\s(z)@2$.
The top symbol $\add$ has already been inspected;
therefore the subpatterns $\add(x,\s(y))$ and $\s(z)$ should still be observed at their respective positions
in order to yield a pattern match for $\ell$ at the root position.
Furthermore there are the goals
$\add(\add(x,\s(y)),\s(z))@1 \hookrightarrow \add(\add(x,\s(y)),\s(z))@1$,
and $\add(\add(x,\s(y)),\s(z))@2 \hookrightarrow \add(\add(x,\s(y)),\s(z))@2$.
These are considered \emph{fresh} goals on a lower position.
Since the purpose of a set automaton is to yield all pattern matches at every position,
these fresh goals are added throughout the construction.

We give two kinds of match goals a special status.
A match goal of the form $\ell@p\hookrightarrow\ell@p$ is called \emph{fresh}.
It represents that we aim to find a pattern match for $\ell$ on position $p$,
but we have not observed any function symbols of that pattern yet.
A match goal of the form $mo\hookrightarrow\ell@\epsilon$ is called a \emph{root goal}.

\subsection{Initial state}
Consider a non-empty pattern set $\cL$.
We start the construction of a set automaton for $\cL$ from the initial state,
and compute the transition function and the other states on the fly.
From the initial state all patterns still need to be observed from the root position.
In terms of the state encoding,
the initial state is formally defined by
$s_0=\{\ell@\epsilon\hookrightarrow \ell@\epsilon\mid\ell\in\cL\}$.
In other words,
the initial state contains every fresh root goal.

\subsection{State labels}
For every state $s$ there must be a position label $L(s)$ in order to construct the transitions from $s$.
It can be chosen from the match obligation positions,
and we demand the extra constraint that this position should be part of a root match goal.
The construction guarantees that every state has a root goal,
which is shown in \cite{setautomaton}.
Similar to Adaptive Pattern Matching Automata~\cite{sekar:adaptive},
there might be multiple positions available to choose from.
Any of such positions can be chosen in the construction of the automaton,
but this position needs to be fixed when $s$ is created.
For example, the initial state only has root goals, so it is labelled with the empty position,
formally $L(s_0)=\epsilon$.

\subsection{Transitions}
From a state $s$ we define $\delta(s,f)$ for every function symbol $f$
in three steps.
First, we compute the so-called $f$-derivative of $s$.
This covers the semantics of subterm pattern matching completely.
That is,
by only taking derivatives we obtain a correct matching automaton.
However, this automaton will be infinite since the derivative operation adds more match goals
with longer positions.

To mitigate this problem we perform two more steps after computing the derivative.
To counteract the growing amount of match goals,
we partition them into independent equivalence classes.
In Fig.~\ref{fig:smallboy} this appears as an arrow going out of a black dot for every equivalence class.
To counteract the growing positions
we maximally shorten all positions in every equivalence class.
In Fig.~\ref{fig:smallboy} the positions on the arrows
going out of the black dots,
indicate which part of the positions was removed.
The shortened equivalence classes of the derivative are then new states,
from which the process of constructing the automaton proceeds.
Eventually one only encounters states that have already been processed.

\subsubsection{Derivatives.}
To capture the semantics of subterm pattern matching we introduce $f$-derivatives of states.
This terminology is borrowed from Brzozowski derivatives of regular expressions~\cite{brzozowski:derivatives}.
Given a state $s$ with state label $L(s)=p$, and a symbol $f$,
we determine $f$-derivative of $s$ in three steps.
Firstly, we look at the match goals of $s$ which have a match obligation with position $p$.
These obligations should be \emph{reduced}.
Secondly, we take all match goals of $s$ which do not have a match obligation with position $p$.
These goals remain \emph{unchanged}.
Thirdly, we add \emph{fresh goals} for every pattern and every $i$ with $1\leq i\leq \#f$.

Consider a match goal $\ell_1@p_1,\dots,\ell_n@p_n\hookrightarrow \ell@p$,
and suppose that we need to reduce this goal modulo position $p_i$ with function symbol $f$.
There are three cases that need to be considered.
\begin{itemize}
\item If $i=1$, $n=1$ and $\ell_1=f(x_1,\dots,x_n)$ for variables $x_1,\dots,x_n$
then $f@p_i$ is the last observation needed to complete the match goal.
Then $\ell@p$ is recorded in the output function of the automaton and the match goal disappears.
\item If $\hd(\ell_i)\neq f$,
the observation $f@p_i$ leads to the conclusion that this pattern cannot match.
Then the match obligation is discarded.
\item Otherwise $\ell_i=f(t_1,\dots,t_{\#f})$ but the match goal is not completed yet.
Then the other pairs $\ell_j@p_j$ with $i\neq j$ remain unchanged,
the pair $\ell_i@p_i$ is removed from the obligation and
for all $j$ such that $t_j\notin\bV$ we add the pair $t_j@p_i.j$ to the obligation of this goal.
\end{itemize}
Let $\pos(mo)$ be the set of positions in a match obligation $mo$.
Formally, reducing a match obligation is captured by $\reduce:\MO\times\bF\times\bP\to\MO\cup\{\emptyset\}$
defined by:
\begin{align*}
\reduce(mo,f,p)=\, &
    \{\ell@q\in mo\mid q\neq p\} \ \cup \\
    & \{\ell|_i@p.i\mid\ell@p\in mo\wedge 1\leq i\leq\#f\wedge\ell|_i\notin\bV\}
    \,.
\end{align*}
Using the mapping $\reduce$, define the $f$-derivative of state $s$ by
\begin{align*}
\deriv(s,f)&=\mathit{reduced} \cup \mathit{unchanged} \cup \mathit{fresh}\,,\text{where} \\
\mathit{reduced}&=
    \{\reduce(mo,f,L(s))\hookrightarrow ma\mid mo\to ma\in s \ \wedge \\
     & \hspace{2cm} \exists\ell:\ell @L(s)\in mo \wedge \hd(\ell)=f
     \wedge \reduce(mo,f,L(s))\neq\emptyset\} \\
\mathit{unchanged}&=\{mo\hookrightarrow ma\in s\mid L(s)\notin\pos(mo)\} \\
\mathit{fresh}&=\{\ell@L(s).i\hookrightarrow \ell@L(s).i\mid\ell\in\cL\wedge 1\leq i\leq\#f\}
\,.
\end{align*}

\begin{example}\label{ex:smallboyderivative}
Consider state $s_1$ of the set automaton in Fig.~\ref{fig:smallboy}.
The parts of $\deriv(s_1,\s)$ are computed as follows:
\begin{align*}
\mathit{unchanged}&=\{
\add(\add(x,\s(y)),\s(z))@1\hookrightarrow \ell@1
\} \\
\mathit{reduced}&=\{
\add(x,\s(y))@1\hookrightarrow\ell@\epsilon
\} \\
\mathit{fresh}&=
\{\add(\add(x,\s(y)),\s(z))@2.1\hookrightarrow\ell@2.1
\}\,.
\end{align*}
Note that the goal $\add(\add(x,\s(y)),\s(z))@2\hookrightarrow\ell@2$ disappears completely
since the expected symbol $\add$ mismatches with the considered symbol $\s$ at position $2$.
\end{example}

\subsubsection{Derivative partitioning.}
One application of $\deriv$ creates new match goals with longer positions.
Repeated application of $\deriv$ therefore results in an automaton with an infinite amount of states.
We first mitigate the issue of new match goals by partitioning the computed derivative with respect to
a dependency relation on match goals.

Note from Example~\ref{ex:smallboyderivative} that the derivative has two match obligations at position
$1$, and one match obligation at position $2.1$.
To obtain an efficient matching algorithm,
it is important that goals with overlapping positions end up in the same equivalence class.
When observing a function symbol at this relative position $1$,
both goals should be reduced simultaneously.
Conversely, sets of goals that are independent from each other can be separated to form a new state with fewer match goals.
When evaluating a set automaton this creates the possibility of exploring parts of the subject term independently.
Given a state $s$,
define the \emph{direct dependency relation} $R_{dep}$ for all goals $g_1=mo_1\hookrightarrow \ell_1@p_1$
and $g_2=mo_2\hookrightarrow\ell_2@p_2$ by
$g_1\mathrel{R_{dep}} g_2$ if, and only if, $\pos(mo_1)\cap\pos(mo_2)\neq\emptyset$.
Note that $R_{dep}$ is reflexive, since match obligations cannot be empty,
and $R_{dep}$ is also symmetric.
But $R_{dep}$ is not transitive, since for the obligations
$mo_1=\{t_1@1\}$,$mo_2=\{t_1@1, t_2@2\}$ and $mo_3=\{t_2@2\}$
we have $\pos(mo_1)\cap\pos(mo_2)=\{1\}$ and $\pos(mo_2)\cap\pos(mo_3)=\{2\}$,
but $\pos(mo_1)\cap\pos(mo_3)=\emptyset$.
To obtain an equivalence we simply take the transitive closure ${\dep}=R_{dep}^*$,
which we refer to as (indirect) dependency.

We partition $\deriv(s,f)$ with respect to $\dep$.
Each equivalence class then corresponds to a new state.
The set of equivalence classes of the derivative is denoted by $[\deriv(s,f)]_{\dep}$.
We use $K$ to range over equivalence classes.

\begin{example}\label{ex:smallboypartition}
Consider the $\s$-derivative of Example~\ref{ex:smallboyderivative}.
Partitioning
yields
\begin{align*}
K_1&=
\{
\add(\add(x,\s(y)),\s(z))@1\hookrightarrow \ell@1, \quad
\add(x,\s(y))@1\hookrightarrow\ell@\epsilon
\}\\
K_2&=
\{
\add(\add(x,\s(y)),\s(z))@2.1\hookrightarrow\ell@2.1
\}
\end{align*}
\end{example}

\begin{example}\label{ex:smallboypartition2}
Consider the $\add$-derivative of $s_2$, which is exactly $s_3$.
Note that the goals $\s(y)@1.2\hookrightarrow\ell@\epsilon$ and
$\add(\add(x,\s(y)),\s(z))@1.1\hookrightarrow\ell@1.1$ are not directly dependent,
but the goal $\add(x,\s(y))@1.1,\s(z)@1.2\hookrightarrow\ell@1$ is directly dependent to both goals.
Therefore we obtain a singleton partition.
\end{example}

\subsubsection{Lifting the positions of classes.}
After partitioning, we shorten the positions of every equivalence class
to mitigate the problem of growing positions.
Let $\pos_{\MA}(K)$ denote the positions of the match announcements of $K$.
We want to `lift' every position in every goal of $K$ by the greatest common prefix of $\pos_{\MA}(K)$.
To ease the notation we write $\gcp(K)$ instead of $\gcp(\pos_{\MA}(K))$.
Since all positions in a state are of the form $\gcp(K).p'$,
we can replace them by $p'$.
Define $\lift(s)$ by $\lift(s)=\{(\lift(mo),\ell@p')\mid (mo,\ell@\gcp(s).p')\in s\}$
where $\lift(mo)=\{\ell@p'\mid \ell@\gcp(s).p'\in mo\}$.

For a state $s$ and a function symbol $f$,
we fix $\delta(s,f)=\{(\lift(K),\gcp(K))\mid K\in[\deriv(s,f)]_{\dep}\}$.
Note that $\gcp(K)$ is also recorded in each transition since it tells us how to traverse the term.

\begin{example}
Continuing in Example \ref{ex:smallboypartition},
we compute the greatest common prefix and corresponding transition for the two equivalence classes.
For $K_1$ we have $\gcp(K_1)=\gcp(\{1,\epsilon\})=\epsilon$.
Then $\lift(K_1)=K_1=s_2$, and therefore $(s_2,\epsilon)\in\delta(s_1,\s)$.
Class $K_2$ has one goal with $\gcp(K_2)=\gcp(\{2.1\})=2.1$.
Then $\lift(K_2)=\{\add(\add(x,\s(y)),\s(z))@\epsilon\hookrightarrow\ell@\epsilon\}$,
which yields the transition $(s_0,2.1)\in\delta(s_1,\s)$.
\end{example}

\subsection{Output patterns}\label{sec:output}
The output patterns after an $f$-transition
are simply the match announcements that accompany the match obligations
that reduce to $\emptyset$:
\[\out(s,f)=\{ma\in \MA\mid f(x_1,\dots,x_n)@L(s)\hookrightarrow ma\in s \wedge x_1,\dots,x_n\in\bV\}\,.\]

\begin{example}
Consider state $s_3$ in Fig.~\ref{fig:smallboy}.
The goal $\s(y)@1.2\hookrightarrow\ell@\epsilon$ can be completed upon observing $\s$ at position $1.2$,
so we fix $\out(s_3,\s)=\{\ell@\epsilon\}$.
\end{example}

\subsection{Construction summary}
The following definitions summarize the construction of a set automaton.
\begin{itemize}
\item $S$ is the set of all reachable states;

\item $s_0=\{\ell@\epsilon\hookrightarrow \ell@\epsilon\mid\ell\in\cL\}$;

\item $L(s)\in\pos(mo)$ for some root goal $mo\hookrightarrow ma\in s$;

\item $\delta(s,f)=\{(\lift(K),\gcp(K))\mid K\in[\deriv(s.f)]_{\dep}\}$ for all $(s,f)\in S\times\bF$; and

\item $\out(s,f)=\{ma\in \MA\mid f(x_1,\dots,x_n)@L(s)\hookrightarrow ma\in s\}$ for all $(s,f)\in S\times\bF$.
\end{itemize}
For the remainder of this paper we assume that instead of patterns, the output function yields rewrite rules.
That is, given the computed output function $\out$ and the TRS $\cR$ we use
$\out'(s,f)=\{(\ell\to r)@p\mid \ell\to r\in\cR\wedge \ell@p\in\out(s,f)\}$.

\section{Rewriting with configuration trees}\label{sec:pruningrewriter}
We formally define a rewrite procedure that uses set automaton matching
to explore the completed configuration tree step by step.
This procedure only works for left-linear TRSs.
An extension for non-linear TRSs will be discussed in the next section.


Algorithm~\ref{alg:pruningrewriter} is the formal foundation for set automaton-based rewriting.
It maintains a set of found redexes, and a configuration tree.
Furthermore it uses some auxiliary manipulations on the configuration tree which we define below.
The strategy determines whether to reduce a found redex, or to continue matching a bud in $ct$.
Formally a strategy is a partial mapping $\select:\conftree\times\cP(\cR\times\bP)\rightharpoonup(S\times\bP)\cup(\cR\times\bP)$
such that $\select(ct,reds)\in\buds(ct)\cup reds$,
for all pairs $(ct,reds)$ such that $\buds(ct)\cup reds$ is non-empty.
Note that strategies are only partial since they are undefined on budless configuration trees
paired with empty redex sets.

If the strategy selects a bud $\bud(s,p)$,
we inspect the head symbol of term $t|_{p.L(s)}$,
and grow the configuration tree according to the transitions given by the set automaton.
Formally this growth is defined by
\begin{align*}
\grow&(ct,s,p,t) = \\
&
\begin{cases}
\node(s,p,\{\bud(s',p.p')\mid (s',p')\in\delta(s,\hd(t|_{p.L(s)}))\}) & \text{if $ct=\bud(s,p)$} \\
\bud(s',p') & \text{if $ct=\bud(s',p')$} \\
            & \quad \text{with $(s',p')\neq(s,p)$} \\
\node(s',p',\{\grow(ct',s,p,t)\mid ct'\in cts\}) & \text{if $ct=\node(s',p',cts)$}
\end{cases}
\end{align*}
Intuitively $\grow(ct,s,p,t)$ replaces every bud $\bud(s,p)$ by a node with the same configuration.
The successors of this node are new buds;
their configurations are given by the used set automaton transition.
When matching a part of a complete configuration tree,
the grown bud is unique due to Theorem~\ref{thm:completedCT}.
After growing the configuration tree, newly found redexes are added to the redex set according to the $\matches$ function
defined in Section~\ref{sec:setautomaton}.

If the strategy selects a redex $(\ell\to r)@p$,
then the term is rewritten to $t[(\ell\to r)@p]$.
The procedure then \emph{prunes} the configuration tree as follows.
Since a set automaton inspects the input term in a top-down fashion,
there must be a configuration in $ct$ where position $p$ was inspected.
We call this the \emph{initialization configuration for redex $(\ell\to r)@p$},
more precisely, the configuration $(s_i,p_i)$ such that $p_i.L(s_i)=p$.
Upon applying the rewrite step $t[(\ell\to r)@p]$,
only the subterm at position $p$ changes.
Any matching that was not done below position $p$ is still valid:
it suffices to prune the configuration tree up to the initialization configuration of the used redex.
Define $\prune(\bud(s,p),q)=\bud(s,p)$ and
\[
\prune(\node(s,p,cts), q)=
\begin{cases}
\bud(s,p) & \text{if $p.L(s)=q$} \\
\node(s,p,\{\prune(ct',q) \mid ct'\in cts\}) & \text{otherwise}\,.\\
\end{cases}
\]

Lastly, the set of redexes needs to be updated after a rewrite.
All redexes that were obtained in the pruned configuration tree are no longer valid.
Given $ct$, denote by $ct[p]$ the sub-configuration tree with root configuration $(s,q)$ in $ct$ such that $p=q.L(s)$.
Note that $ct[p]$ does not necessarily exist,
but if it exists and $ct$ is a fragment of a completed tree,
then $ct[p]$ is unique by Theorem~\ref{thm:completedCT}.

\begin{algorithm}[h!]
\caption{A rewrite procedure that remembers part of the matching information
by pruning a part of the configuration tree.
We assume the availability of a set automaton $(S,s_0,L,\delta,\out)$
constructed for the TRS at hand.
}
\label{alg:pruningrewriter}
\begin{algorithmic}[1]
    \Procedure{Normalize}{\text{Term} $t_0$, \text{Strategy} $\select$}
        \State $t\gets t_0$
        \State $reds\gets\emptyset$          \Comment{Redexes}
        \State $ct\gets\bud (s_0,\epsilon)$ \Comment{Configuration tree}
        \While{$\buds(ct)\neq\emptyset \vee reds \neq\emptyset$}
            \State $action\gets\select$ from $\buds(ct) \cup reds$
            \If{$action$ is a configuration $(s,p)$}
                \State $ct\gets\grow(ct,s,p,t)$
                \State $reds\gets reds\cup\matches(s,p,t)$
            \ElsIf{$action$ is a redex $(\ell\to r)@p\in reds$}
                \State $reds \gets
                    reds \setminus \bigcup\{\matches(s,q,t)\mid (s,q)\in\nodes(ct[p])\}$
                \State $ct\gets \prune(ct,p)$
                \State $t\gets t[(\ell\to r)@p]$
            \EndIf
        \EndWhile
        \Return $t$
    \EndProcedure
\end{algorithmic}
\end{algorithm}

\begin{example}
Consider the TRS $\cR_{if}$ given by the five rules
\begin{align*}
&R_1:if(true,x,y)\rr x  & R_2:if(false,x,y)\rr y & \\
&R_3:not(true)\rr false & R_4:not(false)\rr true & \qquad R_5:not(not(x))\rr x \,.
\end{align*}
In Fig.~\ref{fig:setautomatonif} there is a set automaton generated for $\cR_{if}$.
Contrary to the automaton in Fig.~\ref{fig:smallboy},
this simpler one only produces configuration trees that have the same shape as
the terms that it is executed on.
In Fig.~\ref{fig:examplepruning} there is the term
$t=if(not(not(true)),false,true)$ on the left, and in the middle
there is a partially explored configuration tree of $t$.
It is obtained by growing $\bud(s_0,\epsilon)$ with selected configurations $(s_0,\epsilon)$,
$(s_1,\epsilon)$ and $(s_2,1)$.

The last configuration yields a match for $R_5$ at position $1$.
A rewrite with this redex yields $t'=if(true,false,true)$
and the configuration tree on the right.
The symbol $if$ does not need to be inspected again,
since the initialisation configuration of $R_5@1$ is $(s_1,\epsilon)$.
\end{example}

\begin{figure}
\centering
\includegraphics[scale=0.8]{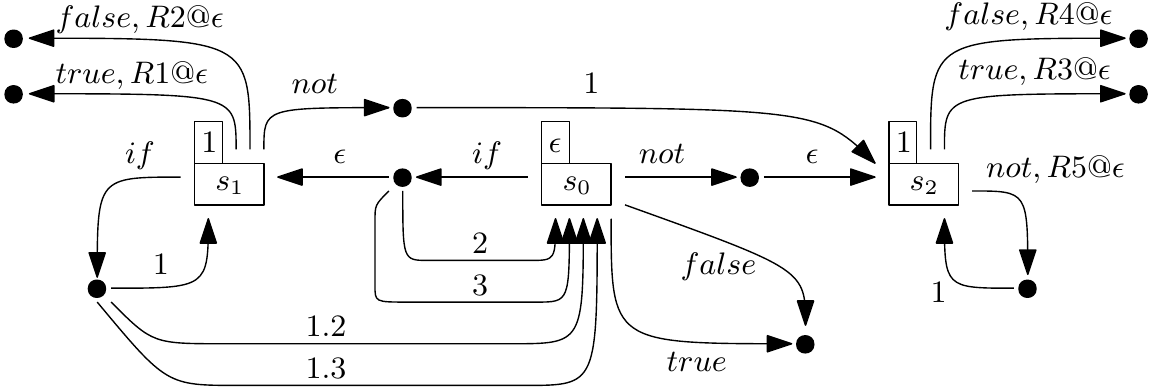}
\caption{A set automaton for $\cR_{if}$. To limit clutter,
$\delta(s_2,if)$ is not displayed.}
\label{fig:setautomatonif}
\end{figure}

\begin{figure}
\centering
\resizebox{\textwidth}{!} {
\includegraphics[scale=0.8]{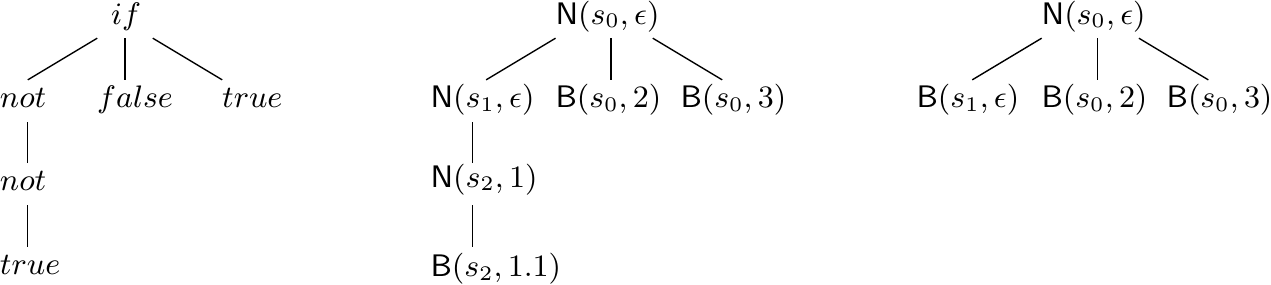}
}
\caption{
The term $t=if(not(not(true)),false,true)$,
a partial configuration tree leading to the discovery of $R_5@1$
and the result of pruning.
}
\label{fig:examplepruning}
\end{figure}

\subsection{Correctness}
We prove that Algorithm~\ref{alg:pruningrewriter},
satisfies a simple invariant.
Intuitively, we show that the maintained configuration tree remains a fragment of the completed configuration tree
of the current term, and the redex set correctly maintains all redexes
that can be derived from the observations in the current configuration tree.
First we need some formalities on configuration trees.

The `is fragment of' relation $\fragment$ on $\cC\cT$ is the smallest relation such that
\begin{itemize}
\item $\bud(s,p)\fragment\bud(s,p)$ and $\bud(s,p)\fragment\node(s,p,cts)$ and
\item $\node(s,p,cts)\fragment\node(s,p,cts')$ if and only if there is a bijection
$\varphi:cts\to cts'$ such that $ct\fragment\varphi(ct)$ for all $ct\in cts$.
\end{itemize}
This definition makes $\langle\cC\cT,\fragment\rangle$ a partial ordering.
Intuitively we have $ct_1\fragment ct_2$ whenever we can change $ct_1$ into
$ct_2$ by zero or more applications of $\grow$.
In particular we are interested in fragments of completed configuration trees.

\begin{restatable}{lemma}{pruneproperties}\label{lem:prune}
The following properties hold for operations on configuration trees.
\begin{itemize}
\item if $ct\fragment\completed(t)$ then $\grow(ct,s,p,t)\fragment\completed(t)$
for all $(s,p)\in\buds(\completed(t))$;
\item if $ct\fragment ct'$ then $\prune(ct,p)\fragment \prune(ct',p)$;
\item $\prune(ct,p)\fragment ct$;
\item if $t\step{(\ell\to r)@p} t'$,
then $\prune(\completed(t),p)=\prune(\completed(t'),p)$; and
\item if $ct[p]$ exists then $\nodes(\prune(ct,p))=\nodes(ct)\setminus\nodes(ct[p])$.
\end{itemize}
\end{restatable}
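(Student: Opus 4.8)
The plan is to dispatch the five items separately, proving the four order-theoretic/structural items by induction on the configuration tree (following the recursive clauses of $\grow$ and $\prune$ together with the inductive definition of $\fragment$), and to isolate the rewrite-compatibility item (the fourth) as the one carrying the real content. For the first item I would induct on the structure of $ct$ and grow a bud $\bud(s,p)$ of $ct$. Since $ct\fragment\completed(t)$, the completed tree must contain a node $\node(s,p,cts_c)$ at the matching place, and by definition of $\completed$ its children are exactly $\{\completed(s',p.p',t)\mid(s',p')\in\delta(s,\hd(t|_{p.L(s)}))\}$. As $\grow$ replaces $\bud(s,p)$ by a node whose children are $\{\bud(s',p.p')\mid(s',p')\in\delta(s,\hd(t|_{p.L(s)}))\}$, indexed by the \emph{same} transition set, the map sending $\bud(s',p.p')$ to $\completed(s',p.p',t)$ is a witnessing bijection (each child satisfies $\bud(s',p.p')\fragment\completed(s',p.p',t)$). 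The recursive node clause of $\grow$ threads this through, using that by Theorem~\ref{thm:completedCT} the grown bud is unique, so $\grow$ is the identity on every other branch and the fragment relation is preserved there trivially.

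Items two and three are pure structural inductions on the definition of $\prune$ and the inductive definition of $\fragment$: in the base case $\prune$ leaves buds untouched; in the node case one splits on whether $p.L(s)$ equals the prune position (both sides collapse to the same $\bud(s,p)$) or not (recurse into the children, reusing the same index bijection from the induction hypothesis). Throughout I rely on the fact that every configuration tree the algorithm manipulates is a fragment of a completed tree, so by Theorem~\ref{thm:completedCT} sibling configurations are pairwise distinct; since $\prune$ preserves root configurations it stays injective on sibling sets, which supplies the bijections required by $\fragment$. The fifth item is again a structural induction: when a node $\node(s,q,cts)$ satisfies $q.L(s)=p$, both $\prune(\cdot,p)$ and the selector $ct[p]$ pick out the whole subtree rooted there, so the node set drops to $\emptyset$ on the pruned side and $\nodes(ct)\setminus\nodes(ct[p])$ is likewise empty; otherwise the unique child whose subtree contains the cut (uniqueness again from Theorem~\ref{thm:completedCT}) is handled by the induction hypothesis, while $\prune$ acts as the identity on the remaining children because none of them inspects $p$, and the node counts add up.

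The fourth item is where the work lies, and the plan is to reduce it to a single structural claim about $\completed(t)$. By Theorem~\ref{thm:completedCT} there is a unique initialization node $n_p=\node(s_i,p_i,cts)$ of $\completed(t)$ with $p_i.L(s_i)=p$, and by the definition of $\prune$ together with the fifth item, $\prune(\completed(t),p)$ is precisely $\completed(t)$ with the subtree rooted at $n_p$ replaced by $\bud(s_i,p_i)$; the same holds for $t'$. Because $t\step{(\ell\to r)@p}t'$ only alters the subterm at $p$, we have $\hd(t|_q)=\hd(t'|_q)$ for every position $q$ that does not have $p$ as a prefix. It therefore suffices to prove the key claim $(\ast)$: \emph{every node of $\completed(t)$ whose inspected position has $p$ as a prefix lies in the subtree rooted at $n_p$}. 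Granting $(\ast)$, every retained node (one outside $n_p$'s subtree) inspects a position where $t$ and $t'$ agree; running the $\completed$ recursion for $t$ and for $t'$ then takes identical transitions at each retained node, builds identical retained structure, and reaches the cut with the same configuration $(s_i,p_i)$, so $\prune(\completed(t),p)=\prune(\completed(t'),p)$.

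Proving $(\ast)$ is the main obstacle, and I would attack it by tracking the \emph{origin} of match-obligation positions through the construction. The point is that a position strictly below $p$ can only enter an obligation via the $\reduce$ step or as a fresh goal, both of which attach new positions of the form $L(s).i$ at the currently inspected position, whereas the \emph{unchanged} goals and the position-lifting step preserve the ancestor relation. Hence an obligation at a position strictly below $p$ must have been created by a reduction at $p$ or at a strictly deeper position; unfolding this by induction on the depth of the position below $p$, and using that by Theorem~\ref{thm:completedCT} the node inspecting $p$ is exactly $n_p$, forces any node inspecting a position with $p$ as a prefix to be a descendant of $n_p$. This causal argument is the delicate part, because it must be reconciled with the look-ahead inherent in the state labels: a node may inspect a position several levels below its own base position. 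The example in which the configuration $(s_3,\epsilon)$ inspects position $1.2$ while still being a descendant of the node inspecting $1$ illustrates precisely why it is the \emph{origin} of the deep obligation, and not the base position of the inspecting node, that has to be the tracked quantity.
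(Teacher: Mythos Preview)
The paper states this lemma but omits its proof entirely (the \texttt{restatable} wrapper suggests it was deferred to a full version or appendix not included here), so there is nothing to compare your proposal against on the paper's side. Assessed on its own merits, your plan is sound and your identification of item~4 as the only nontrivial item is exactly right.

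Your handling of items 1--3 and 5 is fine. One cosmetic remark on item~1: as written in the paper the quantification is over $\buds(\completed(t))$, which is empty since completed trees contain only $\node$'s; you silently (and correctly) read it as $\buds(ct)$, which is how the lemma is actually used in Lemma~\ref{lem:pruninginvariant}.

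For item~4 your reduction to claim $(\ast)$ is the right move, and your origin-tracking argument can be tightened into a clean one-line invariant that removes the hand-waving. The precise statement is: \emph{at every non-root node $n$ of $\completed(t)$, for every absolute obligation position $r$ present at $n$, the parent position $r^{-}$ (i.e.\ $r$ with its last index removed) is inspected by some strict ancestor of $n$}. This follows by induction on the depth of $n$: an absolute obligation position at $n$ either was already present at the parent (induction hypothesis applies), or was freshly introduced by $\reduce$/$\mathit{fresh}$ at the parent, in which case it has the form $q_m.L(s_m).i$ and its parent position is exactly what the parent inspected; lifting does not disturb this because, as you note, it preserves absolute positions. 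Since $L(s_n)$ is itself an obligation position, iterating the invariant along prefixes gives that every strict prefix of the inspected position $q_n.L(s_n)$ is inspected by a strict ancestor of $n$. Claim $(\ast)$ is then immediate from the bijection of Theorem~\ref{thm:completedCT}. This also disposes of the look-ahead worry you flag at the end: the base position $q_n$ of a node is irrelevant; only the inspected position $q_n.L(s_n)$ matters, and the invariant speaks directly about that.

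One small point worth making explicit in your write-up of $(\ast)$: when the derivative is partitioned, two goals sharing an obligation position land in the same class, so any given absolute obligation position is passed to \emph{at most one} child. This is what guarantees that the ``origin'' chain you trace stays on a single root-to-node path rather than fanning out, and hence that the node inspecting $r^{-}$ really is an ancestor (not merely some node elsewhere in the tree).
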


\begin{restatable}{lemma}{pruninginvariant}\label{lem:pruninginvariant}
Let $t_0$ be a ground term.
Then the invariant given by
\[
t_0\rr^* t
\wedge
ct\fragment\completed(t)
\wedge
reds=\bigcup\{\matches(s,p,t)\mid (s,p)\in\nodes(ct)\}
\,,
\]
is initialized and maintained by $\normalize$.
\end{restatable}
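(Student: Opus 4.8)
The plan is to treat this as a loop-invariant argument: I would name the three conjuncts $(I_1)$ $t_0\rr^* t$, $(I_2)$ $ct\fragment\completed(t)$, and $(I_3)$ $reds=\bigcup\{\matches(s,p,t)\mid(s,p)\in\nodes(ct)\}$, verify that all three hold after the initialization lines, and then show that each of the two branches of the while-loop body re-establishes them. For initialization, after lines~2--4 we have $t=t_0$, $reds=\emptyset$, and $ct=\bud(s_0,\epsilon)$: here $(I_1)$ is reflexivity of $\rr^*$; $(I_2)$ holds because $\completed(t_0)=\node(s_0,\epsilon,\ldots)$ and $\bud(s_0,\epsilon)\fragment\node(s_0,\epsilon,cts)$ directly from the definition of $\fragment$; and $(I_3)$ holds because a lone bud has no nodes, so the right-hand union is empty.

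For the growth branch the term is unchanged, so $(I_1)$ is immediate. The selected $(s,p)\in\buds(ct)$, and since $ct\fragment\completed(t)$ this is a configuration to which the first bullet of Lemma~\ref{lem:prune} applies, yielding $\grow(ct,s,p,t)\fragment\completed(t)$, which is $(I_2)$. For $(I_3)$ I would observe that growing a bud into a node adds exactly the configuration $(s,p)$ to the node set and introduces only fresh buds as children, so $\nodes(\grow(ct,s,p,t))=\nodes(ct)\cup\{(s,p)\}$; the update $reds\gets reds\cup\matches(s,p,t)$ therefore matches the new node set precisely.

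For the rewrite branch, write $t'=t[(\ell\to r)@p]$ and $ct'=\prune(ct,p)$. Since every $(s,q)\in\nodes(ct)$ is a node of $\completed(t)$ by $(I_2)$, Theorem~\ref{thm:correctnessofmatching} shows that every element of $reds$ is a genuine redex of $t$; in particular $t\step{(\ell\to r)@p}t'$, which with $(I_1)$ gives $t_0\rr^* t'$. For $(I_2)$ I would chain the middle three bullets of Lemma~\ref{lem:prune} and use transitivity of $\fragment$: $\prune(ct,p)\fragment\prune(\completed(t),p)=\prune(\completed(t'),p)\fragment\completed(t')$. The remaining conjunct $(I_3)$ is the delicate one. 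Because the redex was found, $ct[p]$ exists, so the last bullet of Lemma~\ref{lem:prune} gives the disjoint partition $\nodes(ct)=\nodes(ct')\sqcup\nodes(ct[p])$.

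To finish $(I_3)$ I would establish two facts. First, by the bijection of Theorem~\ref{thm:completedCT} the positions inspected at or below $p$ correspond exactly to the nodes of $ct[p]$, so every surviving node $(s,q)\in\nodes(ct')$ inspects a position that is neither $p$ nor below it; as the rewrite changes the term only at and below $p$ and $\matches(s,q,\cdot)$ depends on the term only through the observed symbol $\hd(t|_{q.L(s)})$, we get $\matches(s,q,t)=\matches(s,q,t')$ for all such nodes. Second, the determinism of the construction together with Theorem~\ref{thm:completedCT} implies each redex is produced by a unique node, so the per-node match sets are pairwise disjoint. Combining these,
\[
reds\setminus\bigcup_{(s,q)\in\nodes(ct[p])}\matches(s,q,t)=\bigcup_{(s,q)\in\nodes(ct')}\matches(s,q,t)=\bigcup_{(s,q)\in\nodes(ct')}\matches(s,q,t'),
\]
which is exactly the value assigned to $reds$. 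The main obstacle I anticipate is precisely this last conjunct: the set-difference bookkeeping collapses to the pruned-tree union only after one has both match-stability at untouched positions and pairwise disjointness of the per-node match families, and I expect the bulk of the work to be a careful argument that the inspection positions surviving the prune all lie outside the rewritten subterm, leaning on Theorem~\ref{thm:completedCT} and the node-partition bullet of Lemma~\ref{lem:prune}.
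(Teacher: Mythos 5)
Your proof follows the paper's own route essentially step for step: the same three-conjunct loop invariant, the same initialization argument, the same treatment of the growth branch (the paper's calculation likewise rests on $\nodes(\grow(ct,s,p,t))=\nodes(ct)\cup\{(s,p)\}$), and, for the rewrite branch, the identical chain $\prune(ct,p)\fragment\prune(\completed(t),p)=\prune(\completed(t'),p)\fragment\completed(t')$ followed by the node-partition bullet of Lemma~\ref{lem:prune}. Where you differ is that you are \emph{more} explicit than the paper on the third conjunct: the paper's calculation stops at $reds'=\bigcup\{\matches(s,q,t)\mid(s,q)\in\nodes(\prune(ct,p))\}$, i.e.\ with matches taken in the \emph{old} term $t$, silently identifying this with the corresponding union over the new term $t'$, and it also applies the identity $\bigcup_{a\in A}f(a)\setminus\bigcup_{a\in B}f(a)=\bigcup_{a\in A\setminus B}f(a)$ without comment. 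Your ``match stability at surviving nodes'' and ``disjointness'' facts are exactly the two properties needed to license those steps, so on this point your write-up is more careful than the paper's.

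One caveat concerns how you justify the first of these facts. Theorem~\ref{thm:completedCT} gives a bijection between nodes and positions of the \emph{whole} term; it does not give the claimed exact correspondence between $\nodes(ct[p])$ and the positions at or below $p$, and that exact correspondence is in fact false. In Fig.~\ref{fig:smallboy} we have $L(s_1)=2$ and $(s_2,\epsilon)\in\delta(s_1,\s)$ with $L(s_2)=1$, so the subtree rooted at the node observing position $2$ contains a node observing position $1$; in general $ct[p]$ may contain nodes observing positions outside the cone below $p$. What is true, and all your argument needs, is the single inclusion: every position at or below $p$ is observed \emph{inside} $ct[p]$, so every surviving node observes a symbol unchanged by the rewrite. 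That inclusion is not a consequence of Theorem~\ref{thm:completedCT} alone; it is a structural property of set automata that this paper packages into the fourth bullet of Lemma~\ref{lem:prune} and that is established in \cite{setautomaton}. The same holds for disjointness (each match is announced by exactly one node): true and needed, but a property of the construction proved in \cite{setautomaton} rather than something that follows from ``determinism'' plus the bijection. Since both facts do hold, your proof goes through; they should be cited as properties of set automata rather than derived from Theorem~\ref{thm:completedCT}.
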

\begin{proof}
For initialization,
since $t=t_0$ we trivially have $t_0\rr^* t$.
Moreover $\bud(s_0,\epsilon)\fragment\completed(t)$
by definition of $\completed(t)$.
Lastly
since $\nodes(ct)=\nodes(\bud(s,p))=\emptyset$
we have
$reds=\emptyset=\bigcup\{\matches(s,p,t)\mid (s,p)\in\nodes(ct)\}$.

For invariant maintenance,
consider $t$, $ct$ and $reds$ such that they satisfy the invariant
and suppose that $\buds(ct)\cup reds\neq\emptyset$.
We prove that both cases of the if-statement preserve the invariant.

\begin{itemize}
\item Suppose that $\select(ct,reds)=(s,p)$, some configuration in $\buds(ct)$,
and let $f=\hd(t|_{p.L(s)})$.
Since $t$ is unchanged, $t_0\rr^* t$ holds by assumption.
Moreover $\grow(ct,s,p,t)\fragment\completed(t)$ by Lemma~\ref{lem:prune}.
Conclude by the calculation:
\begin{align*}
reds' &= reds \cup \matches(s,p,t) \\
      &= \bigcup\{\matches(s',p',t)\mid (s',p')\in\nodes(ct)\}
            \cup \matches(s,p,t) \\
      &= \bigcup\{\matches(s',p',t)\mid (s',p')\in\nodes(ct)\cup\{(s,p)\}\} \\
      &= \bigcup\{\matches(s',p',t)\mid (s',p')\in\nodes(\grow(ct,s,p,t))\}\,.
\end{align*}

\item Otherwise suppose that $\select(ct,reds)=(\ell\to r)@p$, some redex in $reds$.
Let $t'=t[(\ell\to r)@p]$.
By $ct\fragment\completed(t)$ and Theorem~\ref{thm:correctnessofmatching}
it follows that $(\ell\to r)@p$ is a redex of $t$.
Therefore $t_0\rr^* t\step{(\ell\to r)@p} t[(\ell\to r)@p]=t'$,
hence $t_0\rr^*t'$ as needed.
Next we show that $\prune(ct,p)\fragment\completed(t')$.
By assumption $ct\fragment\completed(t)$.
Then Lemma~\ref{lem:prune} enables the calculation:
\begin{align*}
\prune(ct,p)
&\fragment   \prune(\completed(t),p)  \\ 
&=           \prune(\completed(t'),p) \\  
&\fragment   \completed(t')            
\end{align*}
Hence $\prune(ct,p)\fragment\completed(t')$ as required.
Next, let
\[reds'=reds \setminus \bigcup\{\matches(s,q,t)\mid (s,q)\in\nodes(ct[p])\}
\,.
\]
Then:
\begin{align*}
reds' &= reds \setminus \bigcup\{\matches(s,q,t)\mid (s,q)\in\nodes(ct[p])\} \\
      &= \bigcup\{\matches(s,q,t)\mid (s,q)\in\nodes(ct)\} \\
        &\hspace{0.3cm}\setminus \bigcup\{\matches(s,q,t)\mid (s,q)\in\nodes(ct[p])\} \\
      &= \bigcup\{\matches(s,q,t)\mid (s,q)\in\nodes(ct)\setminus\nodes(ct[p])\} \\
      &= \bigcup\{\matches(s,q,t)\mid (s,q)\in\nodes(\prune(ct,p))\}\,.
\end{align*}
Therefore $reds'=\bigcup\{\matches(s,q,t)\mid (s,q)\in\nodes(\prune(ct,p))\}$ as needed.
\end{itemize}
\end{proof}

Lemma~\ref{lem:pruninginvariant} states that Algorithm~\ref{alg:pruningrewriter} produces valid rewrite steps.
Of course, termination depends on the TRS and the strategy,
which we have abstracted from so far and will remain to do so until Section~\ref{sec:implementation}.
We conclude with the corollary of partial correctness.

\begin{corollary}
If $\normalize(t_0,\select)$ returns $t$,
then $t$ is a normal form of $t_0$.
\end{corollary}
\begin{proof}
Suppose that the procedure terminates with values $t$, $ct$ and $reds$.
By Lemma~\ref{lem:pruninginvariant} the invariant is satisfied.
By termination, $\buds(ct)=\emptyset$ and $reds=\emptyset$.
Since $ct$ has no buds, $ct=\completed(t)$.
By Theorem~\ref{thm:correctnessofmatching} and $reds=\emptyset$ there are no redexes in $t$.
To conclude, $t_0\rr^* t$, hence $t$ is a normal form of $t_0$.
\end{proof}

\section{Supporting non-linear rewrite rules}\label{sec:nonlinear}
We extend Algorithm~\ref{alg:nonlinearrewriter} to support non-linear TRSs
using simple subterm comparisons.
Before we address the problem,
some remarks on the literature are in order.
It is well-known that non-linear patterns cannot be matched by a regular tree automaton \cite{tata}.
The proof for this claim is a textbook pumping lemma application.
To support these patterns, tree automata with equality and disequality constraints were introduced
\cite{bogaert:constraints,tata}.
These feature transitions that are guarded by finite constraints on subterms,
and traverse terms in a bottom-up fashion.
Little is known about the efficiency of this approach however.
Tree automata with constraints can check some equalities and disequalities multiple times
as the guards on the transitions can be arbitrarily large.

The interplay between linear pattern matching and subterm equality is also interesting in
the root pattern matching case.
Subterm (dis)equalities may rule out pattern matches, and vice versa,
function symbol observations may rule out the necessity for checking (dis)equality.
Similar automata/decision trees have been used in the root pattern matching case
featuring decision nodes with only a single equality check per transition
\cite{anpmajournalpaper,hondet:dedukti,weerdenburg:matchtrees}.
These approaches are susceptible to redundant equality checks as well;
an optimal solution is unknown to exist.

Originally, set automata do not support non-linear pattern matching.
Since patterns with repeating variables appear to be much-requested whereas an optimal solution is
a significant theoretical challenge,
we simply deploy \emph{consistency checks}.
Whenever a non-linear pattern is yielded by the set automaton,
we need the required subterm equalities to hold before it can be used as a redex.
This simple solution suffices to support non-linear matching,
but we need to take more care in adapting Algorithm~\ref{alg:pruningrewriter} for non-linear rewriting.

We proceed with the following formal definitions.
Let $\ell$ be a pattern,
and recall that $\cE(\ell)$ is the set of positions where $\ell$ has a variable.
Define $\pi(\ell)$
as the unique partition of $\cE(\ell)$ such that
for all sets $P\in\pi(\ell)$,
for all $p,q\in\cE(\ell)$
we have that $p,q\in P$ if, and only if $\ell|_p =\ell|_q$. 
In other words, $\pi(\ell)$ contains the sets of positions where $\ell$ has the same variable.
Note that linear patterns only have singleton partitions, and therefore
$|\pi(\ell)|=|\cE(\ell)|$.
Non-linear patterns have at least one set in their partition
of size at least two, and therefore $|\pi(\ell)|<|\cE(\ell)|$.
We say that \emph{$t$ is consistent with $\pi(\ell)$} iff
for all $P\in\pi(\ell)$, for all $p,q\in P$ we have
$t|_{p}=t|_{q}$.
Lastly, we say that \emph{$t$ pre-matches $\ell$} iff
for all $p\in\cD_{\setminus\cE}(\ell)$
we have that $\hd(t|_{p})=\hd(\ell|_{p})$.

\begin{proposition}\label{prop:nlmatching}
A term $t$ matches the pattern $\ell$ iff
$t$ pre-matches $\ell$ and $t$ is consistent with $\pi(\ell)$.
\end{proposition}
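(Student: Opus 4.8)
The plan is to prove the two implications of the biconditional separately, treating the forward direction as a direct unfolding of the definitions and reserving the real work for the converse. For the forward direction I would assume $t=\ell^\sigma$ for some substitution $\sigma$ and check the two conjuncts. Pre-matching holds because substitution only alters variable positions: for every $p\in\cD_{\setminus\cE}(\ell)$ the path to $p$ passes through function symbols only, so $(\ell^\sigma)|_p=(\ell|_p)^\sigma$, and since $\ell|_p$ has a function symbol at its root this keeps the same head, giving $\hd(t|_p)=\hd(\ell|_p)$. Consistency holds because any two positions $p,q$ in the same block $P\in\pi(\ell)$ carry the same variable $x=\ell|_p=\ell|_q$, whence $t|_p=\sigma(x)=t|_q$.

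For the converse I would construct the witnessing substitution explicitly. Assuming $t$ pre-matches $\ell$ and is consistent with $\pi(\ell)$, for each $x\in\vars(\ell)$ I pick some position $p_x\in\cE(\ell)$ with $\ell|_{p_x}=x$ and set $\sigma(x)=t|_{p_x}$, defining $\sigma$ arbitrarily on variables outside $\vars(\ell)$. The first thing to verify is that $\sigma$ validates the uniform identity $t|_p=\sigma(\ell|_p)$ for \emph{every} $p\in\cE(\ell)$, not merely for the chosen representatives: if $\ell|_p=x$ then $p$ and $p_x$ lie in one block of $\pi(\ell)$, so consistency gives $t|_p=t|_{p_x}=\sigma(x)=\sigma(\ell|_p)$. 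This simultaneously shows $\sigma$ is well defined and packages the consistency hypothesis into a single clause suitable for induction.

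It then remains to prove $t=\ell^\sigma$, which I would establish by structural induction on $\ell$, phrased for an arbitrary term $\ell'$ (so that the induction may descend into variables even though the top-level pattern is never one). The induction hypothesis reads: if $t'|_p=\sigma(\ell'|_p)$ for all $p\in\cE(\ell')$ and $\hd(t'|_p)=\hd(\ell'|_p)$ for all $p\in\cD_{\setminus\cE}(\ell')$, then $t'=\ell'^\sigma$. If $\ell'$ is a variable, the variable-position clause at $\epsilon$ gives $t'=\sigma(\ell')=\ell'^\sigma$ at once. If $\ell'=f(\ell_1,\dots,\ell_n)$, then $\epsilon\in\cD_{\setminus\cE}(\ell')$ forces $\hd(t')=f$ by pre-matching, so $t'=f(t_1,\dots,t_n)$; restricting the two clauses to each argument pair $(t_i,\ell_i)$ by stripping the leading $i$ from positions gives exactly the hypotheses for $\ell_i$, so the inductive step yields $t_i=\ell_i^\sigma$ for each $i$, and hence $t'=\ell'^\sigma$.

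I expect the only genuinely delicate point to be the well-definedness of $\sigma$ and its reformulation as the single invariant $t|_p=\sigma(\ell|_p)$ over all of $\cE(\ell)$: this is the one place where the \emph{global} character of consistency is essential, since it relates subterms of $t$ that may sit in completely unrelated subtrees. Once that invariant is in hand, the remaining induction is routine, merely gluing the variable-position subterms onto the function-symbol skeleton that pre-matching pins down.
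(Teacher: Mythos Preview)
Your proof is correct. The paper, however, does not supply a proof of this proposition at all: it is stated as a standalone proposition and immediately followed by the remark that for linear patterns pre-matching coincides with matching, treating the result as standard. So there is nothing to compare against; you have filled in a routine but non-trivial detail that the authors left implicit.

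A couple of minor remarks on presentation. First, when you write ``by structural induction on $\ell$, phrased for an arbitrary term $\ell'$'', you really mean induction on the structure of $\ell'$; the fixed $\sigma$ was constructed from the original $\ell$, but the inductive statement quantifies over arbitrary $(t',\ell')$. Second, strictly speaking you should observe that pre-matching guarantees $\cD(\ell)\subseteq\cD(t)$, so that the subterms $t|_{p_x}$ you use to define $\sigma$ actually exist: this follows because whenever $p\in\cD_{\setminus\cE}(\ell)$ the equality $\hd(t|_p)=\hd(\ell|_p)$ forces $t|_p$ to have the same arity, hence the same set of child positions. Neither point is a gap in the argument, just places where one more sentence would make the write-up self-contained.
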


For linear patterns, pre-matching coincides with matching since
since the consistency check is satisfied vacuously.
A set automaton checks exactly pre-matching for all patterns and all subterms of $t$ simultaneously.
It does not check for consistency though.
We opted to outsource these checks to the rewrite procedure,
and have set automata yield the non-linear patterns in a separate set.

A \emph{non-linear set automaton} for $\cL$
is a structure $(S,s_0,L,\delta,\out_L,\out_{NL})$
where $S,s_0,L$ and $\delta$ are defined as in set automata,
and $\out_L,\out_{NL}:S\times\bF\to\cP(\cL)$ yield a set of linear patterns and
a set of non-linear patterns respectively.
Given a set automaton $(S,s_0,L,\delta,\out)$ for $\cL$ we obtain a non-linear set automaton
$(S,s_0,L,\delta,\out_{L},\out_{NL})$ for $\cL$ where
$\out_L(s,f)=\{\ell\in\out(s,f)\mid|\pi(\ell)|=|\cE(\ell)|\}$ and
$\out_{NL}(s,f)=\{\ell\in\out(s,f)\mid|\pi(\ell)|<|\cE(\ell)|\}$.
Define $\matches_L$ and $\matches_{NL}$ (for $X\in\{L,NL\}$) by
\[
\matches_X(s,p,t)=\{(\ell\to r)@p.q\mid (\ell\to r)@q\in\out_X(s,\hd(t|_{p.L(s)}))\}
\,.
\]

Algorithm~\ref{alg:nonlinearrewriter} rewrites a term by using a non-linear set automaton.
When applied to a linear TRS it behaves exactly the same as its linear counterpart.
In the case of non-linear patterns more bookkeeping comes into play.
\begin{itemize}
\item All non-linear patterns yielded by the set automaton are first stored in a set of ambiguous matches,
    for they need to be checked for consistency before they can be used to reduce the term.
\item We add the option to check an ambiguous match for consistency
    to the same pool of options as the choice of whether to match a symbol or reduce a redex.
    When a match is consistent then it is \emph{enabled}, and can be chosen in the future to reduce the term.
    Otherwise it is stored in a set of \emph{disabled} matches.
\item When a reduction is applied,
    we need to remove all redexes that were obtained below the pruning point as in Algorithm~\ref{alg:pruningrewriter}.
    Additionally, some disabled matches may be enabled by the change, and vice versa.
    This applies to all enabled and disabled matches that have a repeating variable
    above the position where the rewrite step was applied.
    The procedure moves those matches to the set of ambiguous redexes,
    so they can be checked again at some later point.
\end{itemize}

\begin{algorithm}[!h]
\caption{
A rewriting procedure that supports non-linear patterns.
We assume the availability of a non-linear set automaton $(S,s_0,L,\delta,\out_L,\out_{NL})$
constructed for the TRS at hand.
}
\label{alg:nonlinearrewriter}
\begin{algorithmic}[1]
    \Procedure{NormalizeNonLinear}{\text{Term} $t_0$, \text{Strategy} $\select$}
        \State $t\gets t_0$ 
        \State $reds_L\gets\emptyset$   \Comment{Linear redexes}
        \State $am, dis, en\gets\emptyset$ \Comment{Ambiguous, disabled and enabled pre-matches}
        \State $ct\gets\bud (s_0,\epsilon)$ \Comment{Configuration tree}
        \While{$\buds(ct)\neq\emptyset \vee reds_L \neq\emptyset\vee en\neq\emptyset \vee am\neq\emptyset$}
            \State $action\gets\select$ from $\buds(ct) \cup am\cup (reds_L \cup en)$
            \If{$action$ is a configuration $(s,p)$ from $\buds(ct)$}
                \State $ct\gets\grow(ct,s,p,t)$
                \State $reds_L\gets reds_L\cup\matches_L(s,p,t)$
                \State $am\gets am\cup\matches_{NL}(s,p,t)$
            \ElsIf{$action$ is an ambiguous redex $(\ell\to r)@p$ from $am$}
                \State $am\gets am\setminus\{(\ell\to r)@p\}$
                \If{$t|_p$ is consistent with $\pi(\ell)$}
                    \State $en \gets en\cup\{(\ell\to r)@p\}$
                \Else
                    \State $dis\gets dis\cup\{(\ell\to r)@p\}$
                \EndIf
            \ElsIf{$action$ is a redex $(\ell\to r)@p\in reds_{L}\cup en$}
                \State $reds_L \gets
                    reds_L \setminus \bigcup\{\matches_L(s,q,t)\mid (s,q)\in\nodes(ct[p])\}$
                \State \textsc{Update}$(am,dis,en,ct,t,p)$
                \State $ct\gets \prune(ct,p)$
                \State $t\gets t[(\ell\to r)@p]$
            \EndIf
        \EndWhile
        \Return $t$
    \EndProcedure
    \\
    \Procedure{Update}{$am$, $dis$, $en$,$ct$,$t$,$p$}\Comment{Updates $am$,$dis$ and $en$ after a rewrite}
        \State $en \gets en\setminus \bigcup\{\matches_{NL}(s,q,t)\mid (s,q)\in\nodes(ct[p])\}$
        \State $dis \gets
            dis \setminus \bigcup\{\matches_{NL}(s,q,t)\mid (s,q)\in\nodes(ct[p])\}$
        \State $rem \gets \{
            (\ell\to r)@q \in en\cup dis$
        \State \hspace{3.5em}
            $ \mid \exists P\in\pi(\ell): |P|\geq 2 \wedge \exists q'\in P: q.q'\leq p
        \}$
        \State $en\gets en\setminus rem$
        \State $dis\gets dis\setminus rem$
        \State $am \gets 
            (am \setminus \bigcup\{\matches_{NL}(s,q,t)\mid (s,q)\in\nodes(ct[p])\})\cup rem$
    \EndProcedure
\end{algorithmic}
\end{algorithm}

\begin{example}
We demonstrate Algorithm~\ref{alg:nonlinearrewriter} with the TRS given by the rules
\[
R_1:f(x,x,y)\to y\quad R_2:f(x,y,y)\to x \quad R_3:f(x,y,x)\to x\quad R_4:h(a)\to a
\,,
\]
and the term $t_0=f(a,h(a),h(a))$.
If we match $t_0$ before doing any consistency checks,
we obtain $reds_L=\{R_4@2,R_4@3\}$ and $am=\{R_1@\epsilon,R_2@\epsilon,R_3@\epsilon\}$.
Consistency checking results in $am=\emptyset$,
$dis=\{R_1@\epsilon,R_3@\epsilon\}$, and $en=\{R_2@\epsilon\}$.

Suppose that we apply the redex $R_4@2$ to get $t_1=f(a,a,h(a))$.
The update function reorganizes the non-linear pre-matches.
The disabled pre-match $R_1@\epsilon$ and the enabled pre-match $R_2@\epsilon$
are moved to $am$,
since their consistency status depends on a subterm that changed due to the rewrite step.
The disabled pre-match $R_3@\epsilon$ remains disabled.
\end{example}

\subsection{Correctness}
Partial correctness of Algorithm~\ref{alg:nonlinearrewriter}
can be shown similarly to how we proved partial correctness of Algorithm~\ref{alg:pruningrewriter}.
The most notable difference is the administration that the update function facilitates.
The following proposition states that (in)consistency is preserved for all some pre-matches
under a certain condition.

\begin{proposition}\label{prop:consistencypreservation}
Let $t$ be a term such that $t|_p$ pre-matches $\ell$,
and let $t'=t[u]_{p'}$ for some $u$ and some $p'\geq p$.
If $t'|_p$ pre-matches $\ell$,
and there is no $P\in\pi(\ell)$ and no position $q'\in P$ with $|P|\geq 2$ and $p'\leq q'$,
then
$t|_p$ is consistent with $\pi(\ell)$, if, and only if, $t'|_p$ is consistent with $\pi(\ell)$
\end{proposition}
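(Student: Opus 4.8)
The plan is to reduce the biconditional to a statement purely about consistency, and then to a careful analysis of which variable occurrences of $\ell$ have their subterms altered by the replacement at $p'$.

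First I would strip away pre-matching. By Proposition~\ref{prop:nlmatching} matching is pre-matching together with consistency, and here pre-matching is inert: $t|_p$ pre-matches $\ell$ by hypothesis and $t'|_p$ pre-matches $\ell$ by assumption. In particular the skeleton positions $\cD_{\setminus\cE}(\ell)$, and hence the variable positions $\cE(\ell)$ and the partition $\pi(\ell)$, are the same object on both sides. So the two halves of the claimed equivalence compare, for every class $P\in\pi(\ell)$ with $|P|\geq 2$ and every pair $q_1,q_2\in P$, whether $t|_{p.q_1}=t|_{p.q_2}$ against whether $t'|_{p.q_1}=t'|_{p.q_2}$. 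It therefore suffices to prove that every occurrence $q'$ lying in a non-singleton class of $\pi(\ell)$ satisfies $t'|_{p.q'}=t|_{p.q'}$; once all the compared subterms are literally unchanged, the two families of equalities coincide and the equivalence is immediate.

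Next I would pin down which occurrences can change. Since $t'=t[u]_{p'}$, the subterm $t|_{p.q'}$ is altered only when the absolute occurrence position $p.q'$ is comparable with $p'$ in the prefix order; if they are incomparable then $t'|_{p.q'}=t|_{p.q'}$. Writing $p'=p.m$ (legitimate because $p'\geq p$), comparability of $p.q'$ with $p'$ is just comparability of $q'$ with $m$. The structural key is that the elements of $\cE(\ell)$ are leaves of $\ell$ and hence pairwise incomparable, so a single position $m$ can sit below at most one occurrence's subtree and can sit above occurrences only within one variable subtree. The positional hypothesis on $\pi(\ell)$ is exactly what I would use to forbid the dangerous situation in which $p'$ is comparable to a \emph{repeated} occurrence $q'$ (one whose class has size at least two); ruling this out leaves every non-singleton occurrence incomparable to $m$, whence its subterm is untouched. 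Assembling these, every position appearing in a non-singleton class is unchanged, so for each such class and each pair of its positions the equality holds for $t$ precisely when it holds for $t'$, and quantifying over all classes delivers the biconditional.

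I expect the main obstacle to be precisely this middle step: arguing cleanly that one replacement position can disturb the subterm of at most one variable occurrence, and that the hypothesis indeed prevents that occurrence from being a repeated one. This is where the \emph{direction} of the prefix comparison matters most, since an occurrence $q'$ is affected both when $q'\leq m$ (the replacement lies inside its subterm) and when $m\leq q'$ (the occurrence lies inside the replaced subterm $u$), and one must be sure the hypothesis excludes \emph{both} modes for non-singleton occurrences. It is also the point where the assumption that $t'|_p$ still pre-matches $\ell$ is genuinely used, as it keeps the skeleton and therefore the variable positions fixed across the rewrite, so that the same occurrences are being compared before and after.
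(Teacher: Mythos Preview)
The paper states this proposition without proof, so there is no argument in the text to compare against. Your plan—reduce to showing that the subterm $t|_{p.q'}$ is unchanged for every $q'$ in a non-singleton class of $\pi(\ell)$, then argue via incomparability of $p.q'$ and $p'$—is the natural one and is how such a statement would be proved.

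Your flagged obstacle is not a detail to be tidied up; it is a real gap in the proposition as written. A one-sided prefix condition does not suffice. Taking the literal hypothesis $p'\leq q'$ (which already mixes an absolute position $p'$ with a relative one $q'\in\cE(\ell)$, suggesting a typo): with $\ell=f(x,x)$, $p=\epsilon$, $t=f(g(a),g(a))$, $p'=1.1$, $u=b$, both $t$ and $t'=f(g(b),g(a))$ pre-match $\ell$, no repeated occurrence $q'\in\{1,2\}$ satisfies $p'\leq q'$, yet consistency is lost. Taking instead the condition $p.q'\leq p'$ (which matches the \textsc{Update} procedure's $q.q'\leq p$ and is presumably what was intended): with $\ell=f(g(x),g(x))$, $p=\epsilon$, $p'=1$, $u=g(b)$, neither repeated occurrence $q'\in\{1.1,2.1\}$ satisfies $q'\leq p'$, both terms pre-match $\ell$, and again consistency fails. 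So under either one-sided reading the standalone proposition is false; it holds only under the two-sided condition you identified (no repeated $q'$ comparable with $m$, where $p'=p.m$).

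In Algorithm~\ref{alg:nonlinearrewriter} the missing direction is presumably supplied by the prior removal from $en$ and $dis$ of every non-linear pre-match yielded inside $ct[p]$, together with structural properties of the set automaton; but that is external to the proposition and does not rescue it as an independent statement. Your instinct to isolate this step as the crux was well placed.
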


\begin{lemma}
Let $t_0$ be a ground term.
The invariant given by the conjunction of
\begin{enumerate}
\item $t_0\rr^* t$
\item $ct\sqsubseteq\completed(t)$
\item $reds_L = \bigcup\{\matches_L(s,p,t)\mid (s,p)\in\confs(ct)\}$
\item $am\cup dis\cup en = \bigcup\{\matches_{NL}(s,p,t)\mid (s,p)\in\confs(ct)\}$
\item $reds_L$, $am$, $dis$, and $en$ are disjoint
\item $\forall (\ell\to r)@p\in en: \text{$t|_p$ is consistent with $\pi(\ell)$}$
\item $\forall (\ell\to r)@p\in dis: \text{$t|_p$ is not consistent with $\pi(\ell)$}$
\end{enumerate}
is initialized and maintained by Algorithm~\ref{alg:nonlinearrewriter}.
\end{lemma}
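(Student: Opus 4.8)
The plan is to mirror the structure of the proof of Lemma~\ref{lem:pruninginvariant}: establish the seven conjuncts at initialization, then show that each of the three branches of the conditional in Algorithm~\ref{alg:nonlinearrewriter} preserves them, arguing by induction on the number of loop iterations. Initialization is immediate: with $t=t_0$ we have $t_0\rr^* t$; $\bud(s_0,\epsilon)\fragment\completed(t)$ by definition of $\completed$; the three collection sets and $reds_L$ are all empty while the single initial configuration contributes no recorded matches, so conjuncts 3--5 hold and conjuncts 6--7 hold over empty sets. The bulk of the work is in the maintenance step, where I would reuse the linear arguments for conjuncts 1--3 almost verbatim and concentrate on the non-linear bookkeeping in conjuncts 4--7.

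For the \emph{matching} branch (growing a bud $\bud(s,p)$) the term is unchanged, so conjuncts 1, 6, 7 are immediate and conjunct 2 follows from Lemma~\ref{lem:prune} exactly as before. Conjuncts 3 and 4 are handled by the same set-algebra calculation as in Lemma~\ref{lem:pruninginvariant}, now applied to $\matches_L$ and $\matches_{NL}$ separately: the newly grown node adds precisely $\matches_L(s,p,t)$ to $reds_L$ and $\matches_{NL}(s,p,t)$ to $am$. For disjointness (conjunct 5) I would invoke Theorem~\ref{thm:completedCT}: since the grown node is a genuinely new node of the completed tree, its matches were not previously present in any of the four sets, so the fresh non-linear matches placed into $am$ cannot collide with $reds_L$, $dis$ or $en$. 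For the \emph{consistency-check} branch $t$ and $ct$ are untouched, so 1--3 hold; we merely move one element of $am$ into $en$ or $dis$, which preserves the union (conjunct 4) and disjointness (conjunct 5), while conjuncts 6 and 7 are restored by the explicit consistency test guarding the branch.

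The hard case, and the main obstacle, is the \emph{rewrite} branch together with the \textsc{Update} procedure. Since the selected redex lies in $reds_L\cup en$ it was discovered at a node, so its initialization configuration exists and $ct[p]$ is well defined. Conjunct 1 holds because exactly one step $t\step{(\ell\to r)@p}t'$ is taken, using Proposition~\ref{prop:nlmatching} to certify that an enabled non-linear redex (pre-matched by the automaton and consistent by conjunct 6) really matches; conjunct 2 follows from the same pruning calculation as in Lemma~\ref{lem:pruninginvariant}. The delicate point is re-expressing conjuncts 3--7 in terms of $t'$ and the pruned tree. Two facts must be established: first, that every configuration surviving the prune observes a symbol unaffected by the rewrite, so that $\matches_X(s,q,t)=\matches_X(s,q,t')$ for all such $(s,q)$ (equivalently, $\prune(\completed(t),p)=\prune(\completed(t'),p)$ from Lemma~\ref{lem:prune}); this lets the $\nodes(ct[p])$-removals realign conjuncts 3 and 4 with $\prune(ct,p)$ just as in the linear proof. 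Second, that the consistency status recorded in $en$ and $dis$ remains correct for every match left in place.

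This second fact is where Proposition~\ref{prop:consistencypreservation} does the essential work. I would argue that the set $rem$ computed by \textsc{Update} is exactly the collection of enabled or disabled matches $(\ell\to r)@q$ possessing a repeated-variable occurrence at or above the rewrite point $p$, that is, those for which the hypothesis of Proposition~\ref{prop:consistencypreservation} fails; re-queuing precisely these into $am$ defers their re-evaluation and vacates conjuncts 6--7 of any entry whose status could have flipped. For every match not in $rem$, Proposition~\ref{prop:consistencypreservation} guarantees that $t|_q$ is consistent with $\pi(\ell)$ if and only if $t'|_q$ is, so its membership in $en$ or $dis$ stays valid and conjuncts 6 and 7 are preserved. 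Disjointness (conjunct 5) survives because $rem\subseteq en\cup dis$ consists only of non-linear matches and is moved into $am$, the $\nodes(ct[p])$-removals are applied uniformly to all four sets, and the linear set $reds_L$ can never clash with a non-linear set by the typing of $\out_L$ versus $\out_{NL}$. The main effort is therefore a careful verification that \textsc{Update}'s three simultaneous modifications leave conjunct 4's union intact while restoring 5--7; I expect the position arithmetic relating the relative variable positions in $\pi(\ell)$ to the absolute rewrite position $p$ to be the fiddliest part.
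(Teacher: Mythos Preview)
Your proposal is correct and follows essentially the same approach as the paper: mirror the structure of Lemma~\ref{lem:pruninginvariant}, use Proposition~\ref{prop:nlmatching} together with invariant item~6 to validate non-linear redexes for item~1, reuse the pruning calculus of Lemma~\ref{lem:prune} for item~2, handle items~3--5 by the analogous set-algebra, and invoke Proposition~\ref{prop:consistencypreservation} for the preservation of items~6 and~7 under \textsc{Update}. The paper's own proof is in fact only a sketch pointing to exactly these ingredients, so your write-up is already more detailed than what appears there.
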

\begin{proof}
We only highlight the differences with the proof of Lemma~\ref{lem:pruninginvariant}.
For Item~1,
since Algorithm~\ref{alg:nonlinearrewriter} can also pick non-linear redexes from $en$,
we need Lemma~\ref{prop:nlmatching} and Item 6 of the invariant
to show that $t_0\rr^*t$ is maintained.
Items~3 and 4 are shown similarly to Lemma~\ref{lem:pruninginvariant},
but the proofs are more cumbersome.
Item~5 is a standard exercise in set theory and program correctness.
Items~6 and 7 are preserved due to Proposition~\ref{prop:consistencypreservation}.
\end{proof}

\begin{corollary}
If \textsc{NormalizeNonLinear}($t_0$, $\select$) returns $t$ then $t$ is a normal form of $t_0$.
\end{corollary}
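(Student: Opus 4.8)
The plan is to mirror the partial-correctness argument for Algorithm~\ref{alg:pruningrewriter}, using the seven-part invariant of the preceding lemma in place of Lemma~\ref{lem:pruninginvariant}. Suppose \textsc{NormalizeNonLinear}$(t_0,\select)$ terminates with final values $t$, $ct$, $reds_L$, $am$, $dis$ and $en$. The preceding lemma guarantees that all seven invariant items hold for these values. Termination of the \textbf{while} loop means its guard is false, so $\buds(ct)=\emptyset$, $reds_L=\emptyset$, $en=\emptyset$ and $am=\emptyset$ simultaneously. First I would observe, exactly as in the linear case, that a budless configuration tree satisfying $ct\fragment\completed(t)$ must in fact equal $\completed(t)$; hence $\confs(ct)=\nodes(ct)$ and, by Theorem~\ref{thm:correctnessofmatching} applied through the split $\out=\out_L\cup\out_{NL}$, the union $\bigcup\{\matches_L(s,p,t)\mid(s,p)\in\nodes(ct)\}\cup\bigcup\{\matches_{NL}(s,p,t)\mid(s,p)\in\nodes(ct)\}$ enumerates \emph{all} pre-matches of $t$.

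The core of the argument is then to rule out both linear and non-linear redexes. For the linear patterns, Item~3 of the invariant together with $reds_L=\emptyset$ yields that no node of $ct$ produces a linear pre-match; since pre-matching and matching coincide for linear patterns (the remark following Proposition~\ref{prop:nlmatching}), $t$ has no linear redex. For the non-linear patterns, Item~4 together with $am=\emptyset$ and $en=\emptyset$ collapses to $dis=\bigcup\{\matches_{NL}(s,p,t)\mid(s,p)\in\nodes(ct)\}$, so $dis$ contains precisely the non-linear pre-matches of $t$. Item~7 states that every element of $dis$ is inconsistent, so by Proposition~\ref{prop:nlmatching} none of these pre-matches is an actual match, and $t$ has no non-linear redex either.

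Combining the two cases, $t$ has no redexes at all. Finally, Item~1 supplies $t_0\rr^* t$, so $t$ is a normal form of $t_0$, as required.

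The step I expect to be the main obstacle is the bridge between the set automaton, which only certifies pre-matching, and the actual matching relation of Proposition~\ref{prop:nlmatching}: one must be careful that the completed tree enumerates all pre-matches through the $\matches_L$/$\matches_{NL}$ split, and that the bookkeeping sets exhaustively and disjointly cover these pre-matches (Items~4 and~5), so that emptiness of $am$, $en$ and $reds_L$ genuinely leaves only inconsistent non-linear pre-matches in $dis$. Once this coverage is pinned down, the consistency reasoning via Item~7 and Proposition~\ref{prop:nlmatching} is routine.
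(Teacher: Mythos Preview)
The paper states this corollary without proof, so there is no argument to compare against directly; your proposal is correct and is precisely the natural extension of the paper's proof of the linear-case corollary, using the seven-part invariant in place of Lemma~\ref{lem:pruninginvariant} and discharging the surviving non-linear pre-matches in $dis$ via Item~7 and Proposition~\ref{prop:nlmatching}. Your flagged obstacle---that Theorem~\ref{thm:correctnessofmatching} must be read as enumerating all \emph{pre}-matches once $\out$ is split into $\out_L\cup\out_{NL}$---is the only subtlety, and you handle it correctly.
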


\section{Implementation}\label{sec:implementation}
As a proof of concept, we have implemented the set automaton construction
and an instance of Algorithm~\ref{alg:nonlinearrewriter} in the Rust programming language.
We have dubbed the implementation SABRE: Set Automaton Based Rewrite Engine.
For comparison we have also implemented a Simple Innermost Rewrite Engine (SIRE)
using adaptive pattern matching automata from \cite{sekar:adaptive}.
Both rewriters use maximal subterm sharing, just like mCRL2.
This allows compact representation of terms and cheap term duplication.
Moreover, checking the equality of subterms is a constant time operation. 
The binaries, along with instructions on how to replicate our benchmarking results are available
on Zenodo\footnote{\url{https://doi.org/10.5281/zenodo.7197261}}.

\paragraph{Configuration stack.}
SABRE uses a stack to explore the configuration tree depth-first.
An element on the stack consists of a configuration
and a pointer to the remaining set automaton edges in a hypertransition.
Another optimisation is that a configuration does not store the absolute position and
the top-level term.
Instead it stores the subterm and the positional difference compared to the configuration one layer higher in the stack.
This positional difference stems from a transition in the set automaton; the tool stores a pointer to this position.
These measures ensure that we can store a configuration with a fixed amount of memory,
i.e. no heap allocations are needed.
Moreover, with an additional trick, the prune operation described in Section~\ref{sec:pruningrewriter}
can be done in constant time.
This trick in the set automaton construction involves keeping track of the amount of function symbols
that have been inspected since the creation, for every match goal. 

\paragraph{Outermost matching.}
Recall that set automata enable many traversals of the configuration tree.
We implemented a different dependency relation to construct the set automaton
so that the aforementioned depth-first traversal yields outermost matches first.
On match goals $g_1=mo_1\to\ell_1@p_1$ and $g_2=mo_2\to\ell_2@p_2$ we use the \emph{outermost preserving dependency
relation} $R_{op}$ defined by
$g_1\mathrel{R_{op}} g_2$ if, and only if, $p_1\leq p_2\vee p_2\leq p_1$.
This relation is coarser than the relation $R_{dep}$ described in Section~\ref{sec:construction},
and intuitively maintains look-ahead on patterns until they are not guaranteed to be outermost.
Additionally it preserves all the correctness properties of pattern matching and has the additional advantage that
a depth-first traversal of the configuration tree yields outermost matches first,
unless the first discovered redex is a subterm of an outermost match that may be discovered at a later point.
For example, given the patterns $f(g(h(x)))$ and $g(x)$,
we will always discover $g(x)$ first.
A drawback of this relation is that it yields a bigger automaton.
We refrained from presenting this relation thus far
since the problem is subtle and yields set automata that are too complex to explain.
An in-depth study of these dependency relations is left as future work.

\paragraph{Strategy.} SABRE has a baked-in, mostly outermost strategy.
In principle SABRE applies a rewrite rule at the moment a match is found,
with three exceptions that we will explore.
Firstly, the tool has special treatment of \emph{duplicating} rewrite rules,
i.e. rules for which application duplicates subterms, e.g. $f(x) \to g(x,x)$.
Such rewrite rules are parked on a separate strategy stack,
forming a data structure to remember matching rewrite rules that are not applied upon discovery.
The rule is applied at the moment the entire subconfiguration tree below the node
where the duplicating rule was found has been explored.
This means that when the rule is applied, its subterms (that are duplicated)
are guaranteed to be in normal form.
Secondly, non-linear rewrite rules are not always performed in outermost fashion.
Upon discovery, equivalence of subterms is checked.
If the equivalences do not (yet) hold the match is parked on the strategy stack
until the rest of the configuration subtree has been explored.
Finally, our implementation also supports conditional rewrite rules which is needed
as they are part of the chosen benchmarks.
The implementation is ahead of the theory in this regard.
Conditional rewrite rules are always first put on the strategy stack.
The conditions are evaluated once the entire subconfiguration tree of the conditional pre-match is explored. 

\paragraph{Benchmark selection.}
We have turned to a set of benchmarks used in the recurring Rewriting Engine Competition (REC).
This competition has its own format for TRSs: the REC format.
Translations exist from this REC format to a multitude of popular rewrite engines.
The REC format does not support non-linear rewrite rules,
as not all tools in the competitions support it.
We use the benchmarks available on the webpage\footnote{\url{https://sourcesup.renater.fr/scm/viewvc.php/rec/2019-CONVECS/}}, which contains benchmarks in preparation for a next REC. Some TRSs are excluded.
Some rewrite specifications include a ``META'' section containing code to generate a part of the specification,
we do not support this feature yet.
We have also excluded the benchmarks ``langton6'' and ``langton7'' as they implicitly assume
a priority order on the rewrite rules based on the order in which they are listed,
which we intentionally do not support (and which is not mentioned in \cite{garavel:rec}).
In total we use 74 benchmarks,
which are listed in the Zenodo repository.

\paragraph{Tool selection.}
We decided to benchmark SABRE, SIRE, mCRL2, Maude and Haskell.
There are two rewriters in the mCRL2 toolset:
jitty (just-in-time) and jittyc (compiling jitty).
The jittyc rewrite engine generates and compiles C++ code specific to the TRS.
Haskell and Maude were selected because they won the last competition (REC 2018);
Haskell was the fastest rewrite engine overall,
and Maude was the fastest rewriter among the algebraic languages.

\paragraph{Benchmark setup.}
There are multiple decisions to be made in measuring rewriting performance.
We have opted to split preprocessing time and rewriting time wherever possible.
The context of the development of SABRE is model checking, where we typically need to rewrite millions or billions of terms,
making preprocessing time less relevant.
For jitty and Maude we were not able to split the preprocessing and running times.
They therefore have a (slight) handicap in the running time benchmark.
As not every tool is able to solve every benchmark in a reasonable time frame we use a timeout of 5 minutes,
just like in the REC.
As performance metrics we use both the number of problems solved and the total time spent.
All benchmarks are performed on a 2018 MacBook pro equipped with an 2.6GHz i7 processor and 32GB of RAM.

\begin{table}[htb]
\centering
\begin{tabular}{|l|r|r|r|r|r|r|}
\hline
\textbf{Benchmark} & \textbf{SABRE} & \textbf{SIRE} & \textbf{jitty} & \textbf{jittyc} & \textbf{Maude} & \textbf{Haskell}\\
\hline
        benchsym22 & 9.45 & 5.51 & 31.27 & 0.67 & 3.76 & 0.54 \\ \hline
        benchtree10 & 0.70 & DNF & DNF & 41.00 & 0.51 & 0.22 \\ \hline
        benchtree20 & 11.53 & DNF & DNF & DNF & 6.29 & 1.24 \\ \hline
        benchtree22 & 50.28 & DNF & DNF & DNF & 23.20 & 6.27 \\ \hline
        binarysearch & 11.97 & 12.52 & 61.13 & 1.58 & 24.50 & 1.93 \\ \hline
        bubblesort1000 & 47.18 & 5.70 & 92.17 & 0.75 & 3.46 & 0.49 \\ \hline
        evalsym & 33.71 & 54.43 & DNF & 8.44 & 148.85 & 8.58 \\ \hline
        evaltree & 1.33 & 2.22 & 32.67 & 32.05 & 16.13 & 1.40 \\ \hline
        fib32 & 9.33 & 5.45 & 30.66 & 1.44 & 12.67 & 0.99 \\ \hline
        maa & 136.44 & 27.29 & 42.59 & 1.80 & 22.32 & 0.95 \\ \hline
        mergesort100 & 0.80 & DNF & DNF & DNF & 0.47 & 0.24 \\ \hline
        mergesort1000 & 0.35 & DNF & DNF & DNF & 0.29 & 0.12 \\ \hline
        quicksort100 & 0.34 & DNF & DNF & DNF & 0.55 & 0.23 \\ \hline
        quicksort1000 & 76.11 & DNF & DNF & DNF & 7.23 & 2.30 \\ \hline
        revnat10000 & 56.37 & 31.51 & 33.33 & 2.97 & 14.18 & 6.69 \\ \hline
        sieve2000 & 20.78 & 5.31 & 93.20 & 0.24 & 2.87 & 0.38 \\ \hline
        sieve10000 & DNF & DNF & DNF & 13.54 & 221.90 & 26.35 \\ \hline
        \textbf{Total successes} &  73  &  66  &  65  &  68  &  74  &  74  \\ \hline
        \textbf{Total failures }& 1 & 8 & 9 & 6 & 0 & 0 \\ \hline
        \textbf{Percentage} & 98.7\% & 89.2\% & 87.8\% & 91.9\% & 100.0\% & 100.0\% \\ \hline
        \textbf{Total time} & 813 & 2576 & 3257 & 1909 & 553 & 73 \\ \hline
\end{tabular}
\caption{Running times in seconds to solve a given benchmark,
preprocessing omitted where possible.
DNF = Did Not Finish.
DNF is counted as 300 seconds in the computation of the total time spent.}
\label{tab:benchmarks-running-time}
\end{table}

\paragraph{Benchmark running time results.}
Table \ref{tab:benchmarks-running-time} shows the time it takes to solve the REC problems.
There is not enough space to show the results of all benchmarks.
The table lists all benchmarks where at least one of the tools took more than 30 seconds to finish.
SABRE comfortably beats SIRE, jitty and jittyc, but is beaten by Maude and Haskell.
SABRE does beat Maude in 4 benchmarks by some margin: `binarysearch', `evalexpr', `evalsym' and `evaltree'.
In some benchmarks it appears that the innermost strategies of SIRE, jitty and jittyc are inferior
by a large margin: `benchtree', `mergesort' and `quicksort'.

\paragraph{Preprocessing evaluation.}
As mentioned,
the preprocessing time is of secondary value to us.
Nevertheless, it is important that the preprocessing time is not excessive.
Table \ref{tab:benchmarks-preprocessing} presents statistics on the preprocessing time and the size of the set automaton.
Only the benchmarks with at least 50 rewrite rules are presented in the table.
Constructing the set automaton for the `maa' benchmark is rather expensive.
The determining factor for the cost of set automaton construction seems to be the number of transitions,
which is determined by the equation $\text{\#transitions} = \text{\#symbols} \times \text{\#states}$.
The number of states does not directly correlate to the number of symbols or rewrite rules.
Currently we do not know a method to compute the size of a set automaton
without generating it.
The determining factor seems to be the pattern depth and overlap in the pattern set.

\begin{table}[htb]
\centering
\begin{tabular}{|p{2.5cm}|r|r|r|r|r|r|}
\hline
\textbf{Benchmark} & \thead{\textbf{\#symbols}/ \\\textbf{\#rules}} & \thead{\textbf{\#states}/\\\textbf{\#transitions}} &\textbf{SABRE} & \textbf{SIRE} & \textbf{jittyc} & \textbf{Haskell}\\
\hline
        benchexpr10 & 92/155 & 47/4,324 & 0.13 & 0.50 & 9.81 & 0.81 \\ \hline
        binarysearch & 43/86& 1,282/55,126 & 3.50 & 0.30 & 10.36 & 0.78 \\ \hline
        evalexpr & 27/70 & 24/648 & 0.23 & 0.20 & 8.48 & 0.75 \\ \hline
        fib32 & 33/69 & 1,036/34,188 & 1.76 & 0.30 & 9.43 & 0.79 \\ \hline
        fibfree & 27/232& 101/2,727 & 0.22 & 0.30 & 9.16 & 0.87 \\ \hline
        maa & 708/750 & 831/588,348& 389.69 & 0.73 & 92.94 & 2.32 \\ \hline
        \textbf{Total time} & ~ & ~ & 406.402 & 5.43 & 697.24 & 55.48 \\ \hline
\end{tabular}
\caption{Statistics on the preprocessing time needed by the tools. }
\label{tab:benchmarks-preprocessing}
\end{table}


\paragraph{Rewrite path comparison.}
An interesting question is why certain tools are faster than others on certain benchmarks:
is the rewriting path shorter or is the number of rewrites per second higher?
Table \ref{tab:benchmarks-rewrite-steps} shows the number of rewrite steps needed by SABRE, SIRE and Maude,
for the same benchmarks listed in Table \ref{tab:benchmarks-running-time}.
The other tools do not report the number of rewrite steps.
The overall picture is that SABRE needs fewer rewrite steps than SIRE and Maude, with some exceptions.
SABRE needs fewer symbol inspections per rewrite step,
which is expected as the set automaton remembers all the symbols it has inspected whilst matching.
SABRE is comparatively slow in terms of rewrite steps per second. 
\begin{table}[htb]
\centering
\begin{tabular}{|p{4.5cm}|r|r|r|}
\hline
\textbf{Benchmark} & \textbf{SABRE} & \textbf{SIRE} & \textbf{Maude} \\
\hline
                benchsym22 &  90,303,766  &  90,303,766  &  90,303,764  \\ \hline
                benchtree10 &  47,213  & DNF &  26,717  \\ \hline
                benchtree20 &  72,356,522  & DNF &  25,170,576  \\ \hline
                benchtree22 &  310,384,073  & DNF &  100,668,845  \\ \hline
                binarysearch &  107,501,220  &  265,511,947  &  310,310,897  \\ \hline
                bubblesort1000 &  167,670,646  &  167,670,646  &  168,171,144  \\ \hline
                evalsym &  361,204,694  &  1,438,834,118  &  1,664,833,899  \\ \hline
                evaltree &  15,208,759  &  62,407,963  &  104,614,038  \\ \hline
                fib32 &  84,116,962  &  108,788,784  &  137,416,567  \\ \hline
                maa &  392,119,718  &  441,111,164  & >395,558,243$^+$ \\ \hline
                mergesort100 &  28,278  & DNF &  43,395  \\ \hline
                mergesort1000 &  4,199,124  & DNF &  6,454,547  \\ \hline
                quicksort10 &  695  &  40,645  & 959 \\ \hline
                quicksort100 &  207,982  & DNF & 384,630 \\ \hline
                quicksort1000 &  170,679,654  & DNF &  338,346,151  \\ \hline
                revnat10000 &  50,046,171  &  50,046,171  &  50,046,168  \\ \hline
                sieve2000 &  63,273,997  &  124,110,525  &  124,308,194  \\ \hline
                sieve10000 & DNF & DNF &  9,898,601,115  \\ \hline
                \textbf{Total rewrite steps$^*$} &  1,579,878,114 & 3,149,524,528 & > 3,3 $\times 10^9$\\ \hline
                \textbf{Total symbol inspections$^*$} &   4,001,712,359 &  9,286,297,430  & unknown\\ \hline
                \textbf{Symbol inspections/ \newline rewrite steps$^*$} & 2.53& 2.95 & unknown\\ \hline
                \textbf{Rewrite steps/second$^*$} & 4,236,813& 17,866,702 & >11,513,173,21\\ \hline
\end{tabular}

\caption{Rewrite steps per benchmark.
DNF = Did Not Finish.
$^*$For these metrics we only selected the benchmarks for which SABRE, SIRE and Maude finished in time.
$^+$ The `maa' benchmark consists of many terms that need to be rewritten,
of which Maude reports the number of rewrite steps per term individually.
The figure in the table represents the number of rewrite steps it takes to rewrite the last term,
which dominates the benchmark.}
\label{tab:benchmarks-rewrite-steps}
\end{table}

\paragraph{Bottleneck SABRE.}
Profiling SABRE revealed that its bottleneck is constructing terms using the maximally shared term storage.
A lookup hash table is used to check if a term already exists in the storage.
For each rewrite step several lookups in this hash table are needed.
An optimisation implemented in SIRE and mCRL2s rewriters is to not store
the term that is currently being rewritten in the maximally shared storage.
Suppose that you have the following rewrite rule for subtraction $sub(s(x),s(y)) \to sub(x,y)$.
This rewrite rule will be executed in a large sequence if $x$ and $y$ represent large numbers.
SABRE will call the term storage for every intermediate result.
SIRE, jitty and jittyc will locally store the head symbol $sub$ and the list of subterms.
Applying this rewrite rule can be done by simply updating the list of subterms locally,
avoiding the lookup table.
It is likely that something similar can be done for SABRE,
though the control flow is more difficult than with simple innermost rewriters.

\section{Conclusion, discussion and future work}\label{sec:conclusion}
In this paper we established a formal framework for rewriting based on set automaton matching.
We discussed a general rewrite procedure and proved its correctness,
and provided an extension that allows non-linear TRSs.
We also discussed a specific instance of the general rewrite procedure that
runs an outermost rewrite strategy by saving the configuration tree as a stack.
Our benchmarks show that this method is competitive among existing tools.
There are many optimizations left to explore, some of which require more theoretical exploration
and some of which are more implementation-based.

\paragraph{Evaluation benchmarks.}
Besides being fast, rewriters must obviously also rewrite correctly.
We have checked the output of every REC benchmark
and compared it to the output of other rewriting engines,
revealing that the langton benchmarks are not suitable as they assume priority.
For 50 `quick' benchmarks we have validated and stored the result; they form a suite of regression tests.

For any benchmark suite it is the question how representative it is of real world specifications.
We have opted to use the rather large and well known benchmark suite from the rewriting engine competitions.
Most benchmarks do not have a lot of rewrite rules,
making construction of the set automaton relatively easy.
For comparison, there are 478 rewrite rules in the standard datatype specification of mCRL2,
and industrial models can contain thousands of rewrite rules.
An omission in the REC benchmarks is the lack of non-linear rewrite rules,
we have thus not benchmarked SABRE in the presence of non-linear rules.

\paragraph{Future work.}
In the future we want to expand the set automaton rewriting technique
to minimize the information lost upon performing a rewrite step.
For instance,
the structure of the right-hand sides is not used in our procedures.
A theoretical optimization is to precompile the configuration tree fragments that arise from the right-hand sides,
ensuring that we do not have to match the function symbols of the right-hand side after subterm replacement.

Building upon this optimization, this new configuration tree fragment can yield redexes straight away.
This results in another rewrite, and another precompiled fragment that can be attached to an even higher pruning point.
If such a rewrite sequence is long, then it gives rise to the following algorithmic problem.
Suppose that
there exists a sequence $\{t_i\}_{1\leq i\leq n}$,
such that
$t_0\step{R_1@p_1} t_1\step{R_2@p_2}\dots\step{R_n@p_n}t_n$.
How do we compute $t_n$ efficiently,
without necessarily computing every intermediate $t_i$?
Seeing as a lot of time is spent executing rewrite steps,
including many queries to the maximally shared term storage,
it might be beneficial to investigate this problem.


Our benchmarks revealed that there is a great performance difference between
rewriting strategies.
Rewriting strategies have been subjected to extensive study,
both of practical and theoretical nature
\cite{odonnell:computing,ogata:estrategy,pol:justintime,terese,visser:stratego}.
Strategies are usually defined as functions that yield a reduction path for a term,
or as subsets of the rewrite relation $\rr^*$.
The works that focus on strategies typically abstract from the fact that pattern matching is
a non-trivial algorithmic problem.
We intend to integrate the existing work on strategies with our methods.
A requirement is that deciding which step to take next can be done efficiently
in unison with matching using the set automaton.

The proof of concept implementation of the set automaton is competitive with
existing tools but is beaten by Maude, the fastest rewriter in its class.
Haskell is still an order of magnitude faster but imposes different restrictions on the rewrite rules.
Constructing the set automaton can be slow for larger rewrite systems.
Compared to innermost rewriters (such as jittyc) SABRE is much faster when
the outermost rewriting path is shorter.
When the innermost and outermost paths are equal in length it comes down to raw rewriting performance,
where SABRE is currently beaten.
We reckon there is still room to optimize the raw rewriting performance,
especially considering it is still an experimental prototype.
More specifically, as suggested in Section \ref{sec:implementation},
we could minimize the number of lookups in the maximally shared term storage by first rewriting terms locally.

\subsubsection*{Acknowledgements}
We want to thank
Hubert Garavel for sending us the benchmark set used in Rewrite Engine Competition.

\bibliographystyle{splncs04}
\bibliography{bibliography}
\end{document}